\documentclass{article}

\usepackage{microtype}
\usepackage{graphicx}
\usepackage{subcaption}
\usepackage{booktabs} 

\usepackage{hyperref}

\usepackage[preprint]{icml2026}

\usepackage{amsmath}
\usepackage{amssymb}
\usepackage{mathtools}
\usepackage{amsthm}
\usepackage[table]{xcolor}  
\usepackage{colortbl}       
\usepackage{array}   

\usepackage[capitalize,noabbrev]{cleveref}

\theoremstyle{plain}
\newtheorem{theorem}{Theorem}[section]
\newtheorem{proposition}[theorem]{Proposition}
\newtheorem{lemma}[theorem]{Lemma}
\newtheorem{corollary}[theorem]{Corollary}
\theoremstyle{definition}
\newtheorem{definition}[theorem]{Definition}

\theoremstyle{remark}

\usepackage{tikz}
\usetikzlibrary{positioning,arrows.meta,shapes.misc}
\usepackage{tikz}
\usepackage{multirow}
\usepackage[textsize=tiny]{todonotes}

\icmltitlerunning{}

\begin{document}

\twocolumn[
  \icmltitle{Minimizing Mismatch Risk: A Prototype-Based Routing Framework for Zero-shot LLM-generated Text Detection}

  \icmlsetsymbol{visiting}{\dagger}

  \begin{icmlauthorlist}
    \icmlauthor{Ke Sun}{sch}
    \icmlauthor{Guangsheng Bao}{sch}
    \icmlauthor{Han Cui}{sch}
    \icmlauthor{Yue Zhang*}{sch}
  \end{icmlauthorlist}

  \icmlaffiliation{sch}{School of Engineering, Westlake University, China}

  \icmlcorrespondingauthor{Yue Zhang}{zhangyue@westlake.edu.cn} 

  \icmlkeywords{Machine Learning, LLM Detection, Zero-shot Learning, Prototype-Based Routing}

  \vskip 0.3in
]



\printAffiliationsAndNotice{}  

\begin{abstract}
Zero-shot methods detect LLM-generated text by computing statistical signatures using a surrogate model. Existing approaches typically employ a fixed surrogate for all inputs regardless of the unknown source. We systematically examine this design and find that detection performance varies substantially depending on surrogate-source alignment. We observe that while no single surrogate achieves optimal performance universally, a well-matched surrogate typically exists within a diverse pool for any given input. This finding transforms robust detection into a routing problem: selecting the most appropriate surrogate for each input. We propose DetectRouter, a prototype-based framework that learns text-detector affinity through two-stage training. The first stage constructs discriminative prototypes from white-box models; the second generalizes to black-box sources by aligning geometric distances with observed detection scores. Experiments on EvoBench and MAGE benchmarks demonstrate consistent improvements across multiple detection criteria and model families.

\end{abstract}
\section{Introduction}

The rapid proliferation of Large Language Models has revolutionized natural language generation, enabling applications ranging from creative writing to code synthesis. However, this capability poses significant risks to information integrity, including the mass production of disinformation, academic dishonesty, and the contamination of web corpora used to train future models~\citep{milano2023large,demszky2023using,anil2024generativeAI}. Consequently, distinguishing between human-written and machine-generated text has become an imperative safety challenge~\citep{yang2024survey,wu2025survey}.

\begin{figure}[t]
    \begin{center}
       \includegraphics[width=1\linewidth]{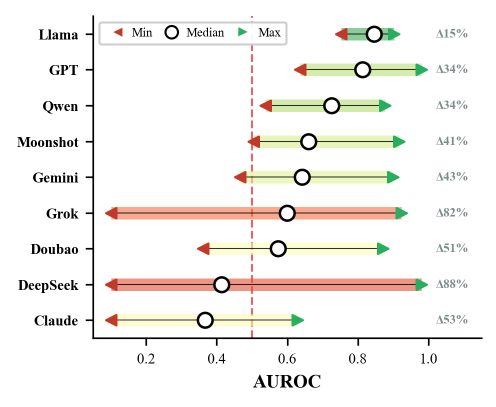}
    \end{center}
       \caption{Detection variance of Fast-DetectGPT across nine black-box LLM families on the MIRAGE benchmark. Each bar spans from the minimum to maximum AUROC achieved by different surrogate models, with circles indicating the median. The percentage on the right denotes the performance gap between the best and worst surrogate for each source.}
    \label{fig:intro}
\end{figure}

Among existing detection approaches, zero-shot methods have gained prominence for their training-free nature. These methods employ a surrogate LLM to compute token-level statistics of the candidate text, such as conditional probability or likelihood curvature, and aggregate these statistics into a detection score that discriminates machine-generated content from human writing~\citep{mitchell2023detectgpt,bao2024fastdetectgpt,sun2026ai}. While such methods achieve strong performance in the white-box setting where the source model is accessible, their effectiveness degrades substantially in real-world black-box scenarios where the generating model is unknown~\citep{fu2025detectanyllm,bao2025glimpse}. This gap presents an urgent challenge that demands systematic investigation.

A key factor underlying this limitation is that all existing zero-shot methods employ a fixed surrogate model for detection across diverse text sources, justified by the distribution matching hypothesis that probability distributions across different LLMs are similar~\citep{mitchell2023detectgpt,bao2024fastdetectgpt}. Our preliminary experiments challenge this assumption. As shown in Figure~\ref{fig:intro}, evaluating Fast-DetectGPT with nine open-source surrogate models on the MIRAGE benchmark~\citep{fu2025detectanyllm} reveals dramatic performance variance: for challenging sources such as DeepSeek and Grok, the gap between the best and worst surrogate exceeds 80\%, spanning from near-random to strong detection. We further quantify this phenomenon in Section~\ref{sec:analysis} and derive the Mismatch Risk Bound, which establishes that the performance gap is upper-bounded by the square root of the KL divergence between source and surrogate distributions. This bound transforms detection into a routing problem: given an input text, we must identify the surrogate that minimizes distributional divergence to the unknown source.

Building on this insight, we propose DetectRouter, a prototype-based routing framework for zero-shot LLM detection. Given a text of unknown origin, DetectRouter first encodes it into a learned metric space, then identifies the nearest prototype among a set of detector-specific representations, and finally routes the text to the corresponding optimal detector for scoring. Since true source distributions are inaccessible for black-box models, we approximate distributional affinity geometrically: we learn an embedding space where distances to detector prototypes reflect underlying distributional proximity. DetectRouter employs a two-stage training strategy. Stage~1 constructs discriminative prototypes using white-box models with known source labels through contrastive learning. Stage~2 generalizes to black-box sources by treating detection scores from the surrogate pool as supervision and minimizing the KL divergence between predicted affinity and observed score distributions. At inference time, the router routes each input to its most compatible detector via nearest-prototype matching.

Extensive experiments on EvoBench and MAGE benchmarks demonstrate that DetectRouter consistently improves all six detection criteria, with relative gains ranging from 5.4\% to 139.4\% and averaging 36.1\% over fixed-surrogate baselines. DetectRouter achieves state-of-the-art performance with 90.85\% average AUROC on EvoBench and 77.92\% on MAGE, outperforming the best baseline by 9.84\% and 4\% points respectively. 

Our contributions are summarized as follows:
\begin{itemize}
    \item We conduct the first systematic analysis of surrogate selection in zero-shot LLM detection, revealing that surrogate-source affinity is the decisive factor for black-box performance.
    \item We propose DetectRouter, a prototype-based routing framework that learns to match input texts to optimal detectors through two-stage training combining discriminative prototype construction with distributional alignment.
    \item Extensive experiments on two benchmarks demonstrate that DetectRouter achieves state-of-the-art performance and serves as a universal enhancement layer for all zero-shot detection methods.
\end{itemize}

\section{Related Work}

\paragraph{Text generation detection.}
Existing methods fall into two categories. Training-based methods fine-tune classifiers with ensemble, contrastive, or adversarial objectives~\cite{abburi2023generative,hu2023radar,guo2024biscope,chen2025imitate,su2023hc3}, but often degrade on unseen models or prompts~\cite{yang2024survey,wu2025survey,sun2025idiosyncrasies}. Statistics-based methods instead treat a surrogate LLM as a scoring function and design detection rules from token probabilities, yielding zero-shot detectors without task-specific data. GLTR uses token-level statistics such as log-likelihood, entropy, and rank histograms~\cite{gehrmann2019gltr}; DetectLLM exploits log-rank scores~\cite{su2023detectllm}; DetectGPT measures log-likelihood curvature under perturbations~\cite{mitchell2023detectgpt}; and Fast-DetectGPT replaces sampling with an efficient analytic approximation~\cite{bao2024fastdetectgpt}. Recent extensions include kernel-based tests~\cite{yu2024dpic,song2025deep} and memorization-based detectors~\cite{guo2024biscope}. Despite these advances, most methods assume a single surrogate suffices; our work instead routes each input to the most suitable detector from a pool.

\paragraph{Routing and origin tracing for LLMs.}
Model routing learns to assign inputs to appropriate models from a pool with varying accuracy-cost trade-offs, enabling efficient inference under budget constraints~\citep{hu2024routerbench,huang2025routereval,jitkrittum2025universal,chuang2024learning}. However, existing approaches primarily treat LLMs as task solvers for QA or reasoning, rather than as detectors for generated text. Origin-tracing methods such as~\citet{li2023origin} and~\citet{fu2025fdllm} build classifiers to identify which LLM produced a given text in black-box scenarios, but their goal is provenance attribution rather than exploiting detector complementarity for improved detection. \citet{sun2025idiosyncrasies} show that different LLMs exhibit highly separable idiosyncratic patterns, enabling source recovery with high accuracy. 

\begin{figure}[t]
    \centering
    \includegraphics[width=1.0\linewidth]{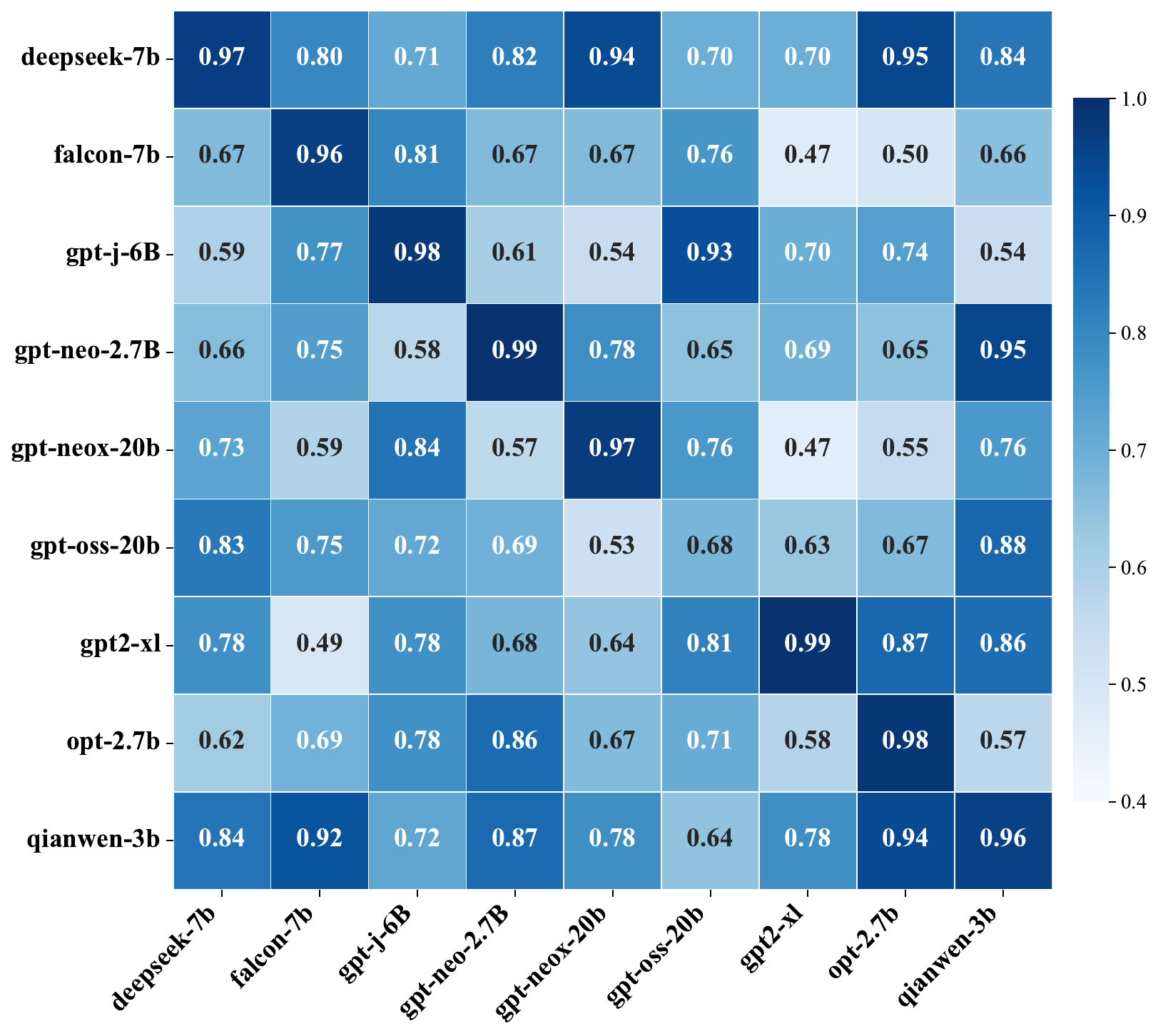}
    \caption{Affinity matrix of detection performance using Fast-DetectGPT. Nine open-source models serve as both generators (rows) and surrogates (columns). Diagonal dominance confirms optimal performance requires source-surrogate alignment, while off-diagonal patterns reveal structured affinity governed by architectural similarity.}
    \label{fig:analysis_confusion}
\end{figure}

\section{Analysis: The Cost of Misalignment}
\label{sec:analysis}

Before presenting our method, we investigate the validity of the Distribution Matching Hypothesis that underpins current zero-shot detection approaches.
Standard methods employ a fixed surrogate model to estimate statistical artifacts, assuming its probability landscape sufficiently approximates diverse source distributions~\citep{mitchell2023detectgpt, bao2024fastdetectgpt}.
While appealing in simplicity, this assumption overlooks \textit{model idiosyncrasies}---distinct statistical fingerprints arising from varying architectures and pre-training corpora~\citep{sun2025idiosyncrasies, yang2024survey}.

\subsection{Experimental Setup}
\label{sec:analysis_setup}

We design cross-evaluation experiments in both white-box and black-box settings. For the white-box setting, we follow the protocol of Fast-DetectGPT~\citep{bao2024fastdetectgpt} and use XSum~\citep{Narayan2018DontGM} as the seed corpus. Nine open-source LLMs ranging from 2.7B to 20B parameters each generate 500 text continuations via nucleus sampling. We then evaluate every model as a surrogate detector on all nine corpora, yielding a $9 \times 9$ cross-evaluation matrix. For the black-box setting, we utilize the MIRAGE benchmark~\citep{fu2025detectanyllm} with proprietary models including GPT-4, Claude, Gemini, and DeepSeek. Complete experimental details are provided in Appendix~\ref{app:analysis_setup}.

\subsection{Empirical Investigation}
\label{sec:analysis_empirical}

\paragraph{White-box Setting.}
Figure~\ref{fig:analysis_confusion} visualizes the pairwise AUC scores for Fast-DetectGPT.
The heatmap reveals two critical patterns.
First, we observe pronounced diagonal dominance, confirming that detectors achieve optimal performance only when source equals surrogate.
Second, the off-diagonal degradation is structured: surrogates sharing architectural lineage, such as GPT-Neo and GPT-J families, maintain high mutual affinity, whereas structurally distinct pairs exhibit sharp performance collapse.
Notably, even among models of comparable scale, architectural differences alone can cause AUC drops exceeding 30 percentage points. We provide affinity matrices for all five detection criteria in Appendix~\ref{app:analysis_white}.

\paragraph{Black-box Setting.}
As illustrated in Figure~\ref{fig:intro}, the performance variance across different surrogate choices is extreme.
For challenging sources like Claude and DeepSeek, AUC spans from near-random levels around 0.2 to near-perfect detection exceeding 0.9, depending solely on surrogate selection.
While the median performance hovers near random baseline, the maximum consistently remains high---confirming that detection capability exists within the open-source community but is fragmented across models.
This fragmentation implies that no single surrogate universally dominates, necessitating adaptive selection based on input characteristics. Extended visualizations including per-source distributions are provided in Appendix~\ref{app:analysis_black}.

\begin{table}[t]
\centering
\caption{Quantifying source-surrogate misalignment across five detection criteria. Matched: source = surrogate; Cross: source $\neq$ surrogate. For black-box setting, Max/Min denote the best/worst surrogate averaged across all generator families.}
\label{tab:analysis_cost}
\small
\setlength{\tabcolsep}{4pt}
\begin{tabular}{l|cc|ccc}
\toprule
& \multicolumn{2}{c|}{White-box / XSum} & \multicolumn{3}{c}{Black-box / MIRAGE} \\
Criterion & Matched & Cross & Mean & Max & Min \\
\midrule
Likelihood & 0.81 & 0.42 & 0.49 & 0.76 & 0.35 \\
Rank & 0.76 & 0.60 & 0.52 & 0.64 & 0.43 \\
LogRank & 0.85 & 0.46 & 0.48 & 0.73 & 0.35 \\
Entropy & 0.55 & 0.71 & 0.59 & 0.74 & 0.41 \\
Fast-DetectGPT & 0.94 & 0.71 & 0.65 & 0.93 & 0.47 \\
\midrule
Average & 0.78 & 0.58 & 0.55 & 0.76 & 0.40 \\
\bottomrule
\end{tabular}
\end{table}

\subsection{Quantitative Analysis}
\label{sec:analysis_quantitative}

Table~\ref{tab:analysis_cost} quantifies the cost of source-surrogate misalignment across five detection criteria.
In the white-box setting, average AUC drops from 0.78 in the matched condition to 0.58 in the cross-model condition, representing a 26\% relative decline.
For Fast-DetectGPT, performance degrades from 0.94 to 0.71---a gap that transforms a near-perfect detector into a mediocre one.

In the black-box setting, variance across surrogate choices is equally striking.
While average performance is only 0.55, the best surrogate achieves 0.76 on average, confirming that optimal surrogates exist for each source.
For Fast-DetectGPT, the spread is particularly pronounced: the best surrogate achieves 0.93 while the worst drops to 0.47.
These findings establish that fixed surrogate selection incurs substantial penalties, yet optimal choices exist within a diverse model pool---motivating a routing mechanism for dynamic surrogate selection.
\subsection{Theoretical Justification}
\label{sec:analysis_theory}

The empirical results highlight a consistent pattern: detection reliability is inversely correlated with the distributional distance between surrogate and source.
To formalize this, we establish the following notation. Let $P_{src}$ denote the probability distribution of the source model that generated the text, and let $P_{sur}$ denote the distribution of the surrogate model used for detection. Given a bounded scoring function $T(x): \mathcal{X} \to \mathbb{R}$ with $|T(x)| \leq B$, the optimal detection signal is $\mu^* = \mathbb{E}_{P_{src}}[T(x)]$, while the proxy signal obtained using the surrogate is $\mu_{proxy} = \mathbb{E}_{P_{sur}}[T(x)]$.

\begin{proposition}[Mismatch Risk Bound]
\label{prop:mismatch_bound}
For any bounded zero-shot detection statistic $T$ with $|T(x)| \leq B$:
\begin{equation}
    |\mu^* - \mu_{proxy}| \leq B \cdot \sqrt{2 D_{KL}(P_{src} \| P_{sur})}
    \label{eq:mismatch_bound}
\end{equation}
where $D_{KL}$ denotes the Kullback-Leibler divergence.
\end{proposition}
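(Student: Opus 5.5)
The plan is to bound the difference of expectations by total variation distance and then apply Pinsker's inequality. Write $\mu^* - \mu_{proxy} = \int T(x)\,\bigl(dP_{src}(x) - dP_{sur}(x)\bigr)$. Assume $D_{KL}(P_{src}\|P_{sur})<\infty$, since otherwise the bound is trivial. Then $P_{src}\ll P_{sur}$, and both measures have densities $p,q$ with respect to a common dominating measure $\nu$; for instance $\nu = P_{src}+P_{sur}$ works, or counting measure if $\mathcal{X}$ is the discrete space of token sequences. We only need $T$ measurable, which is implicit in the statement.

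The first step is the elementary bound
\begin{equation*}
|\mu^*-\mu_{proxy}| \le \int |T|\,|p-q|\,d\nu \le B\int|p-q|\,d\nu = 2B\,\delta_{TV}(P_{src},P_{sur}),
\end{equation*}
where $\delta_{TV} = \tfrac12\int|p-q|\,d\nu = \sup_A |P_{src}(A)-P_{sur}(A)|$. The second step invokes Pinsker's inequality, $\delta_{TV}(P,Q)\le \sqrt{\tfrac12 D_{KL}(P\|Q)}$. Combining the two gives
\[
|\mu^*-\mu_{proxy}| \le 2B\sqrt{\tfrac12 D_{KL}(P_{src}\|P_{sur})} = B\sqrt{2D_{KL}(P_{src}\|P_{sur})},
\]
which is exactly \eqref{eq:mismatch_bound}.

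The main obstacle is bookkeeping of constants and conventions rather than any deep step. One must use the $\tfrac12$-normalized total variation in Pinsker's inequality and natural logarithms in $D_{KL}$. With these conventions the factor $2$ from $|T|\le B$ (an oscillation of width $2B$) combines with Pinsker's $1/\sqrt{2}$ to give $\sqrt{2}$. If base-2 logarithms were used, an extra $\ln 2$ factor would appear, so I would state the natural-log convention explicitly.

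If one wants a self-contained argument, Pinsker's inequality can be proved by the standard reduction. By the data-processing inequality applied to the partition $\{A, A^c\}$ with $A=\{p>q\}$, it suffices to prove the two-point inequality $d(a\|b)\ge 2(a-b)^2$ for Bernoulli parameters. That inequality follows by fixing $a$ and differentiating in $b$.

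I would also remark on the interpretation. Here $T$ is treated as a fixed function of $x$ and the expectations are over texts. In the actual detectors the statistic itself depends on the surrogate, so the proposition should be read as a bound on the distribution-shift component only.
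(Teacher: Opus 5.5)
Your proposal is correct and follows the paper's proof essentially step for step. You bound the gap by $B\int|p_{src}-p_{sur}|\,d\nu = 2B\,d_{TV}(P_{src},P_{sur})$ using $|T|\le B$, then apply Pinsker's inequality to reach $B\sqrt{2D_{KL}(P_{src}\|P_{sur})}$. Two of your additions are improvements in rigor over the paper's writeup: taking densities with respect to a common dominating measure (instead of the paper's Lebesgue $dx$, which is ill-suited to discrete text), and stating the natural-log convention.
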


The proof proceeds via Total Variation distance and Pinsker's inequality; we provide the complete derivation in Appendix~\ref{app:theory_proof}.
This bound rigorously explains our empirical findings: cross-model degradation reflects inflated KL divergence when architectures differ, and the optimal surrogate is precisely the minimizer of this divergence.
Consequently, robust zero-shot detection transforms into a routing problem: given a pool of surrogate models $\{M_1, \ldots, M_K\}$ with corresponding distributions $\{P_{M_1}, \ldots, P_{M_K}\}$, the optimal strategy is to select the surrogate $M_k$ that minimizes $D_{KL}(P_{src} \| P_{M_k})$ for each input.
This insight forms the foundation of DetectRouter.

\section{Methodology}
\label{sec:method}

\begin{figure*}[t]
    \centering
    \includegraphics[width=1.0\linewidth]{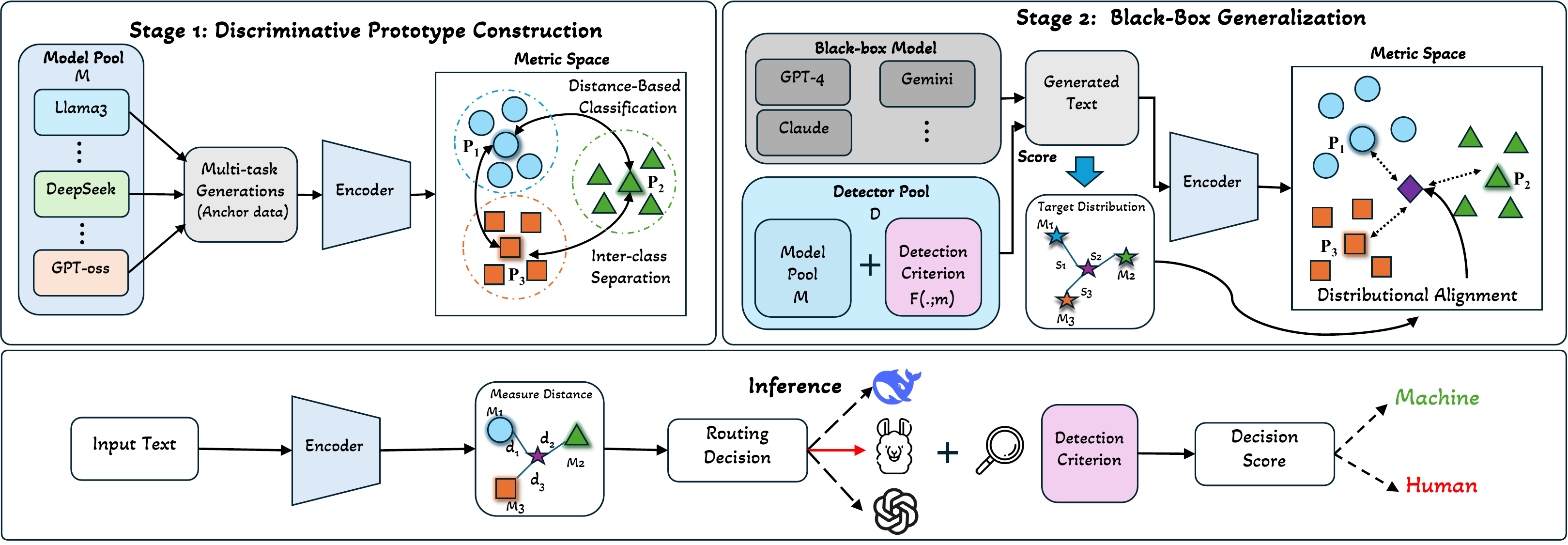}
    \caption{Overview of DetectRouter. Stage~1 constructs discriminative prototypes from white-box models via multi-task generations and distance-based classification. Stage~2 generalizes to black-box sources by aligning geometric distances with detection score distributions. At inference, the router selects the optimal detector based on nearest-prototype affinity.}
    \label{fig:framework}
\end{figure*}

\subsection{Problem Formulation}
\label{sec:method_formulation}

Our analysis in Section~\ref{sec:analysis} suggests that the optimal strategy is to dynamically select a surrogate $M_k$ minimizing $D_{KL}(P_{src} \| P_{M_k})$. This reframes detection as a routing problem as shown in Figure~\ref{fig:framework}: given text $x$ from unknown source $M_{src}$, identify the optimal detector from pool $\mathcal{D} = \{D_1, \ldots, D_N\}$, where each $D_k$ pairs a surrogate $M_k$ with a detection criterion $F(\cdot; M_k)$ such as Likelihood or Fast-DetectGPT:
\begin{equation}
    k^* = \operatorname*{arg\,min}_{k \in \{1,\dots,N\}} D_{KL}(P_{src} \| P_{M_k})
    \label{eq:optimal_routing}
\end{equation}

Since $P_{src}$ is inaccessible for black-box models, we approximate distributional affinity geometrically. We learn an encoder $E_\theta: \mathcal{X} \to \mathbb{R}^d$ and represent each detector $D_k$ with learnable prototypes $\mathcal{P}_k = \{\mathbf{p}_{k,1}, \ldots, \mathbf{p}_{k,K}\}$ capturing the signatures of model family $k$. The routing decision becomes a nearest-prototype query:
\begin{equation}
    k^* = \operatorname*{arg\,min}_{k} \min_{\mathbf{p} \in \mathcal{P}_k} \| E_\theta(x) - \mathbf{p} \|_2
    \label{eq:geometric_routing}
\end{equation}
We employ two-stage training: Stage~1 builds discriminative prototypes from white-box models, while Stage~2 aligns black-box sources via distributional alignment.

\subsection{Stage 1: Discriminative Prototype Construction}
\label{sec:method_stage1}

The first stage establishes a well-structured metric space using $N$ white-box models from the model pool $\mathcal{M}$ where ground-truth source labels are available. A key design consideration is ensuring that prototypes capture intrinsic stylistic fingerprints rather than task-specific artifacts. We construct a diverse anchor dataset $\mathcal{T}_1 = \{(x_j, y_j)\}_{j=1}^{|\mathcal{T}_1|}$ through multi-task generations: each model $M_i$ is tasked with (1). direct generation from seed prompts, (2). text polishing of human drafts, and (3). rewriting of existing passages. This diversity forces the encoder to identify model-specific patterns that persist across varying conditions, yielding prototypes that generalize beyond any single task.

\paragraph{Distance-Based Classification.}
We quantify the affinity between an input $x$, embedded as $\mathbf{z} = E_\theta(x)$, and a detector class $i$ via the distance to its nearest prototype:
\begin{equation}
    d_i(\mathbf{z}) = \min_{k \in \{1, \ldots, K\}} \|\mathbf{z} - \mathbf{p}_{i,k}\|_2
    \label{eq:prototype_distance}
\end{equation}
These geometric distances are transformed into classification probabilities using a temperature-scaled softmax, where a smaller distance implies higher affinity:
\begin{equation}
    P(y=i \mid \mathbf{z}) = \frac{\exp(-d_i(\mathbf{z}) / \tau)}{\sum_{j=1}^N \exp(-d_j(\mathbf{z}) / \tau)}
    \label{eq:class_probability}
\end{equation}

\paragraph{Training Objectives.}
The encoder and prototypes are optimized jointly using a composite objective. We minimize the cross-entropy loss $\mathcal{L}_{\text{CE}}$ to cluster embeddings around their ground-truth prototypes:
\begin{equation}
    \mathcal{L}_{\text{CE}} = -\frac{1}{|\mathcal{T}_1|} \sum_{(x,y) \in \mathcal{T}_1} \log P(y \mid E_\theta(x))
\end{equation}
To further structure the metric space, we introduce a margin-based separation loss $\mathcal{L}_{\text{sep}}$ that enforces inter-class separation by penalizing prototype pairs from different classes closer than margin $m$:
\begin{equation}
    \mathcal{L}_{\text{sep}} = \frac{1}{Z} \sum_{i \neq j} \sum_{k, l} \max\left(0, m - \|\mathbf{p}_{i,k} - \mathbf{p}_{j,l}\|_2\right)^2
\end{equation}
A normalization term $\mathcal{L}_{\text{norm}}=\sum_{i,k} \|\mathbf{p}_{i,k}\|_2^2$ constrains prototype magnitude for training stability. The complete Stage~1 objective is:
\begin{equation}
    \mathcal{L}_1 = \mathcal{L}_{\text{CE}} + \lambda_{\text{sep}} \mathcal{L}_{\text{sep}} + \lambda_{\text{norm}} \mathcal{L}_{\text{norm}}
\end{equation}
Upon convergence, this yields a discriminative embedding space characterized by high intra-class compactness and clear inter-class separation.

\subsection{Stage 2: Black-Box Generalization via Distributional Alignment}
\label{sec:method_stage2}

While Stage~1 builds the coordinate system, it relies on explicit labels. Real-world scenarios involve unknown black-box models such as GPT-4 where labels are inaccessible. To bridge this gap, we leverage the vector of detection scores produced by the detector pool $\mathcal{D}$ as an empirical proxy for distributional alignment.

\paragraph{Target Distribution Construction.}
For a black-box text $x$, we observe a score vector $\mathbf{s} = [s_1, \ldots, s_N]$ from the detector pool. These scores reflect the text's statistical proximity to each known model family. Instead of regressing absolute scores, which can be unstable due to scale differences, we normalize them into a target distribution $Q$ via temperature-scaled softmax:
\begin{equation}
    Q_i = \frac{\exp(s_i / T)}{\sum_{j=1}^N \exp(s_j / T)}
    \label{eq:target_dist}
\end{equation}
Simultaneously, the router outputs a predicted affinity distribution $P$ derived from the geometric distances in the prototype space as defined in Eq.~\ref{eq:class_probability}.

\paragraph{KL-Divergence Optimization.}
We formulate the learning objective as a distribution matching problem, minimizing the Kullback-Leibler divergence between the target distribution $Q$ and the predicted geometric distribution $P$:
\begin{equation}
    \mathcal{L}_{\text{KL}} = D_{KL}(Q \| P) = \sum_{i=1}^N Q_i \log \frac{Q_i}{P_i}
    \label{eq:kl_loss}
\end{equation}
This objective explicitly aligns the router's geometric logic with the theoretical goal of minimizing mismatch risk: the router learns to position the black-box embedding such that its distances to prototypes mirror the actual detection affinities.

\paragraph{Prototype Anchoring.}
Unsupervised adaptation risks catastrophic forgetting, where prototypes might drift and destroy the discriminative structure learned in Stage~1. To prevent this, we employ prototype anchoring by freezing a copy of the converged Stage~1 prototypes $\{\mathbf{p}^{\text{anchor}}\}$ and penalizing deviations:
\begin{equation}
    \mathcal{L}_{\text{anchor}} = \sum_{i,k} \|\mathbf{p}_{i,k} - \mathbf{p}_{i,k}^{\text{anchor}}\|_2^2
    \label{eq:anchor_loss}
\end{equation}
The complete Stage~2 objective combines distributional alignment with regularization:
\begin{equation}
    \mathcal{L}_2 = \mathcal{L}_{\text{KL}} + \lambda_{\text{anc}} \mathcal{L}_{\text{anchor}} + \lambda_{\text{sep}} \mathcal{L}_{\text{sep}} + \lambda_{\text{norm}} \mathcal{L}_{\text{norm}}
    \label{eq:stage2_loss}
\end{equation}
This ensures the router generalizes to unseen sources while retaining the robust topology of known models.

\subsection{Inference}
\label{sec:method_inference}

During inference, given an input text $x$, the encoder projects it into the metric space to calculate the distances $\{d_1, \ldots, d_N\}$ against all detector prototypes. The routing decision identifies the detector with the highest affinity by solving $k^* = \arg\min_k d_k$. The input is then routed to the selected detector $D_{k^*}$, which computes the final detection score to classify the text as human-written or machine-generated.

\section{Experiments}

\subsection{Experimental Setup}
\begin{figure*}[t]
    \centering
    \includegraphics[width=1.0\linewidth]{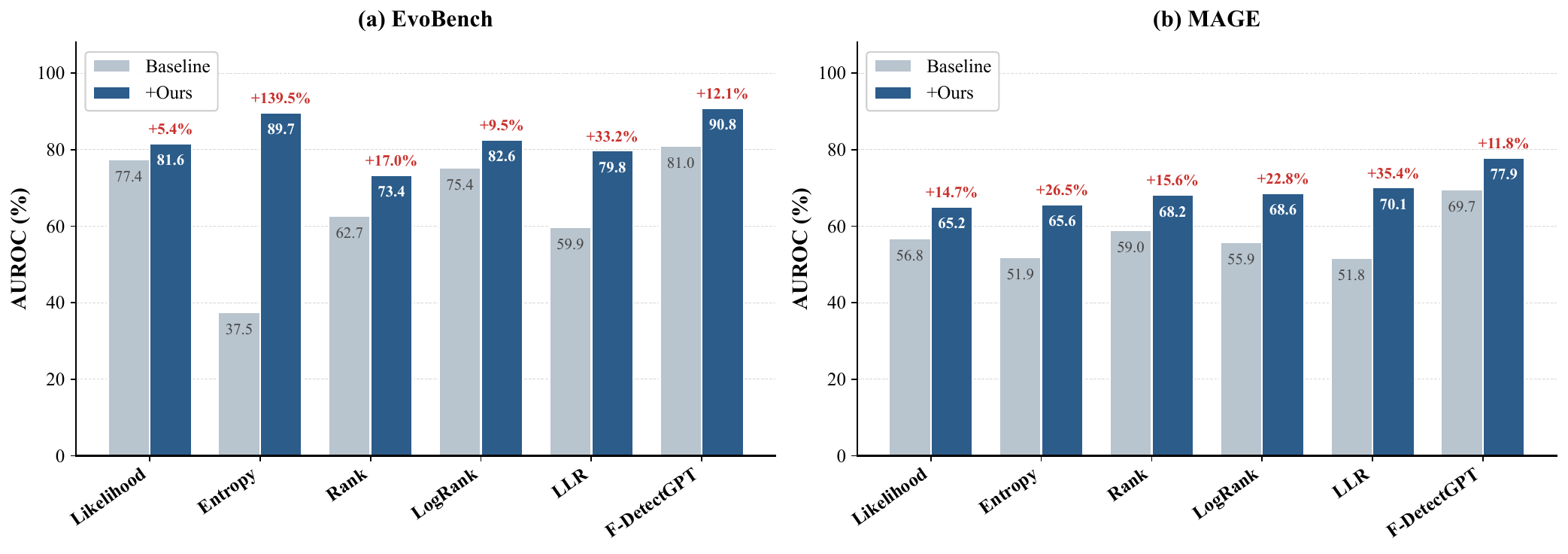}
    \caption{Performance improvement from applying DetectRouter to each zero-shot detection method. Bars show average AUROC on EvoBench and MAGE; numbers indicate relative improvement.}
    \label{fig:routing_improvement}
\end{figure*}
\paragraph{Training Data.}
We train DetectRouter on the MIRAGE benchmark~\citep{fu2025detectanyllm}, a comprehensive multi-task dataset spanning five domains---Academic, News, Comment, Email, and Website---and three generation tasks: \textit{Generate}, \textit{Polish}, and \textit{Rewrite}. For Stage~1, we synthesize 30K training samples by prompting locally cached surrogate models to generate text aligned with the MIRAGE schema. For Stage~2, we leverage MIRAGE directly with detection scores annotated across our full detector pool, providing supervision for learning text-detector affinity patterns.

\paragraph{Surrogate Models and Detection Methods.}
Our surrogate pool comprises ten open-source LLMs spanning 1.5B to 20B parameters: GPT-J-6B~\citep{wang2021gptj}, GPT-NeoX-20B~\citep{biderman2023gptneox20b}, GPT-Neo-2.7B~\citep{black2021gptneo}, GPT2-XL~\citep{radford2019language}, GPT-oss-20B~\citep{openai2025gptoss20b}, LLaMA3-8B~\citep{dubey2024llama3}, Falcon-7B~\citep{almazrouei2023falcon}, DeepSeek-7B~\citep{deepseek2023llm}, OPT-2.7B~\citep{zhang2022opt}, and Qwen-3B~\citep{bai2023qwen}. This selection ensures diversity across architectures and training corpora. We evaluate six zero-shot detection criteria---Likelihood, Entropy, Rank, LogRank, LLR, and Fast-DetectGPT~\citep{bao2024fastdetectgpt}---comparing each fixed-surrogate baseline against our routed variant. We first benchmark all detection methods across all ten surrogate models and identify LLaMA3-8B as the consistently best-performing surrogate; we therefore fix all baseline methods to use LLaMA3-8B, ensuring the most competitive comparison rather than an artificially weakened baseline.

\paragraph{Router Configuration.}
The router employs LLM2Vec-LLaMA3-8B~\citep{behnamghader2024llm2vec} as the encoder for main experiments, with lightweight alternatives explored in ablation studies. Each class is represented by 10 learnable prototypes with cosine similarity and temperature $\tau{=}10$. Both training stages run for 8 epochs with batch size 32; Stage~1 uses learning rate $2{\times}10^{-5}$ while Stage~2 reduces to $10^{-5}$ and balances cross-entropy, distance regression, and prototype anchoring losses.

\paragraph{Baselines.}
We compare against representative zero-shot detection methods including token-level statistics such as Likelihood~\citep{solaiman2019release}, Entropy and Rank~\citep{gehrmann2019gltr}, and LogRank~\citep{mitchell2023detectgpt}; relative scoring via LLR~\citep{su2023detectllm}; and perturbation-based Fast-DetectGPT~\citep{bao2024fastdetectgpt}. We also include recent approaches: Lastde~\citep{xu2024training}, FourierGPT~\citep{xu2024detecting}, Diveye~\citep{basani2025diversity}, and UCE~\citep{hou2025theoretical}. To isolate the contribution of our routing framework from the training data, we additionally train supervised classifiers on the same Stage~2 data. All baselines use the same surrogate model for fair comparison.

\begin{table*}[t]
\centering
\caption{Detection performance on EvoBench and MAGE. Fixed-surrogate baselines use LLaMA3-8B. AUROC is reported in percentage.}
\label{tab:main_results}
\resizebox{0.95\textwidth}{!}{
\begin{tabular}{l|ccccccc|c|c}
\toprule
& \multicolumn{8}{c|}{EvoBench} & MAGE \\
\cmidrule{2-9} \cmidrule{10-10}
Method & LLaMA3 & LLaMA2 & Claude & GPT-4o & Gemini & GPT-4 & Qwen & Avg & Avg \\
\midrule
Likelihood & 88.84 & 61.68 & 79.70 & 77.80 & 78.89 & 77.80 & 77.40 & 77.44 & 56.81 \\
Entropy & 28.85 & 57.58 & 33.61 & 35.43 & 38.34 & 38.10 & 30.34 & 37.46 & 51.90 \\
Rank & 68.49 & 58.38 & 64.50 & 64.64 & 64.46 & 63.43 & 55.06 & 62.71 & 58.99 \\
LogRank & 86.47 & 59.99 & 77.37 & 76.80 & 75.99 & 74.87 & 76.10 & 75.37 & 55.91 \\
LLR & 66.46 & 49.61 & 61.73 & 60.57 & 59.12 & 58.43 & 63.08 & 59.86 & 51.78 \\
Fast-DetectGPT & 94.17 & 80.27 & 81.57 & 74.89 & 80.48 & 75.58 & 80.10 & 81.01 & 69.69 \\
Lastde & 90.80 & 80.54 & 76.75 & 71.34 & 76.50 & 71.51 & 69.20 & 76.66 & 70.71 \\
FourierGPT & 63.99 & 57.23 & 58.20 & 55.52 & 56.69 & 55.16 & 64.39 & 58.74 & 60.34 \\
Diveye & 84.65 & 73.46 & 74.21 & 73.32 & 74.06 & 72.28 & 68.85 & 74.40 & 69.60 \\
UCE & 75.62 & 61.35 & 68.72 & 62.04 & 61.28 & 59.38 & 63.63 & 64.57 & 54.17 \\
Supervised & 74.96 & 66.88 & 73.38 & 77.66 & 62.07 & 76.16 & 66.49 & 71.26 & 73.92 \\
\rowcolor{gray!20}
DetectRouter & \textbf{95.66} & \textbf{80.82} & \textbf{94.22} & \textbf{94.29} & \textbf{93.79} & \textbf{94.16} & \textbf{83.01} & \textbf{90.85} & \textbf{77.92} \\
\bottomrule
\end{tabular}
}
\end{table*}

\paragraph{Evaluation Benchmarks.}
We evaluate on two complementary in-the-wild benchmarks. MAGE~\citep{li2024mage} provides broad coverage with 30,265 test pairs across eight model families---OpenAI, OPT, FLAN, LLaMA, BigScience, EleutherAI, GLM, and human paraphrase---representing established generators and diverse domains. EvoBench~\citep{yu2025evobench} specifically targets the evolving nature of LLMs, comprising 32,993 pairs across seven families and 29 evolving versions, capturing both publisher updates and developer modifications such as fine-tuning and pruning. Following prior work~\citep{mitchell2023detectgpt,bao2024fastdetectgpt}, we report AUROC as the primary metric. Complete details on data construction, hyperparameters, and benchmark statistics are provided in Appendix~\ref{app:exp_setup}.
\subsection{Main Results}
A central contribution of DetectRouter is its ability to serve as a universal enhancement layer for any zero-shot detection criterion. We first demonstrate the universal enhancement property before presenting state-of-the-art comparisons.

\begin{table}[t]
\centering
\caption{Ablation on training stages and loss components. Average AUROC is reported in percentage. $\Delta$ denotes relative drop from the full method.}
\label{tab:ablation_components}
\small
\begin{tabular}{l|cc|c}
\toprule
Setting & EvoBench & MAGE & $\Delta$ \\
\midrule
Fixed Surrogate & 81.01 & 69.69 & -10.7\% \\
\midrule
Direct Classification & 83.08 & 75.34 & -5.9\% \\
Stage 1 Only & 72.87 & 75.67 & -11.3\% \\
Stage 2 Only & 87.39 & 74.49 & -4.1\% \\
\midrule
Full w/o $\mathcal{L}_{KL}$ & 87.07 & 75.56 & -3.6\% \\
Full w/o $\mathcal{L}_{sep}$ & 88.53 & 74.41 & -3.5\% \\
Full w/o $\mathcal{L}_{anchor}$ & 89.56 & 74.90 & -2.6\% \\
\midrule
Full & 90.85 & 77.92 & -- \\
\bottomrule
\end{tabular}
\end{table}

\paragraph{Routing as a Universal Enhancement.}
Figure~\ref{fig:routing_improvement} visualizes the performance gains achieved by applying our routing mechanism to each of the six detection methods. On EvoBench, routing provides relative improvements ranging from 5.4\% for Likelihood to 139.4\% for Entropy. On MAGE, improvements range from 11.8\% for Fast-DetectGPT to 35.4\% for LLR. Entropy-based detection benefits most dramatically: the fixed-surrogate baseline achieves only 37.46\% on EvoBench, falling below random chance for several model families, while routing elevates performance to 89.70\%. This demonstrates that our framework is not tied to any specific detection criterion but serves as a general-purpose enhancement layer that can be seamlessly integrated with existing methods. Per-family breakdown for all method combinations is provided in Appendix~\ref{app:routing_details}.

\begin{table}[t]
\centering
\caption{Ablation on router encoder architectures. Average AUROC is reported in percentage alongside model parameters and inference latency per sample.}
\label{tab:ablation_encoder}
\resizebox{\columnwidth}{!}{
\begin{tabular}{l|cc|cc}
\toprule
Encoder & EvoBench & MAGE & Params & Latency \\
\midrule
Fixed Surrogate & 82.53 & 69.68 & -- & -- \\
\midrule
BERT-base & 86.41 & 71.07 & 109M & 4.87ms \\
RoBERTa-base & 88.34 & 74.10 & 125M & 4.87ms \\
RoBERTa-large & 88.18 & 75.59 & 355M & 12.70ms \\
\midrule
LLM2Vec-LLaMA3 & 90.85 & 77.92 & 8B & 166.2ms \\
\bottomrule
\end{tabular}
}
\end{table}

\paragraph{Comparison with Baselines.}
Table~\ref{tab:main_results} presents AUROC scores on EvoBench with per-family breakdown and MAGE average. DetectRouter achieves the best overall performance at 90.85\% on EvoBench and 77.92\% on MAGE, substantially outperforming all baselines. Among fixed-surrogate methods, Fast-DetectGPT is the strongest at 81.01\% on EvoBench and 69.69\% on MAGE; our routing mechanism improves upon this baseline by 9.84 and 8.23 percentage points respectively. The improvement is particularly pronounced on proprietary models where distributional gaps are largest: on Claude, performance increases from 81.57\% to 94.22\%; on GPT-4o, from 74.89\% to 94.29\%; and on GPT-4, from 75.58\% to 94.16\%. Among recent zero-shot methods, Lastde achieves the highest average at 76.66\% on EvoBench, yet DetectRouter outperforms it by 14.19 percentage points. FourierGPT, Diveye, and UCE all fall below our method by substantial margins, demonstrating that adaptive routing provides orthogonal benefits to detection criterion design. Full per-family results on MAGE are provided in Appendix~\ref{app:mage_full}.

To isolate the contribution of our routing framework from the training data, we train supervised classifiers on the same Stage~2 data. The best supervised baseline using LLama3-8b representations achieves 71.26\% on EvoBench and 73.92\% on MAGE. DetectRouter outperforms this baseline by 12.93\% points on EvoBench while matching performance on MAGE, demonstrating that the gains stem from learned text-detector affinity patterns rather than simply utilizing more training data. Full results for all ten surrogate configurations are provided in Appendix~\ref{app:supervised}.

\subsection{Ablation and Analysis}


\begin{figure}[t]
    \centering
    \includegraphics[width=1.0\linewidth]{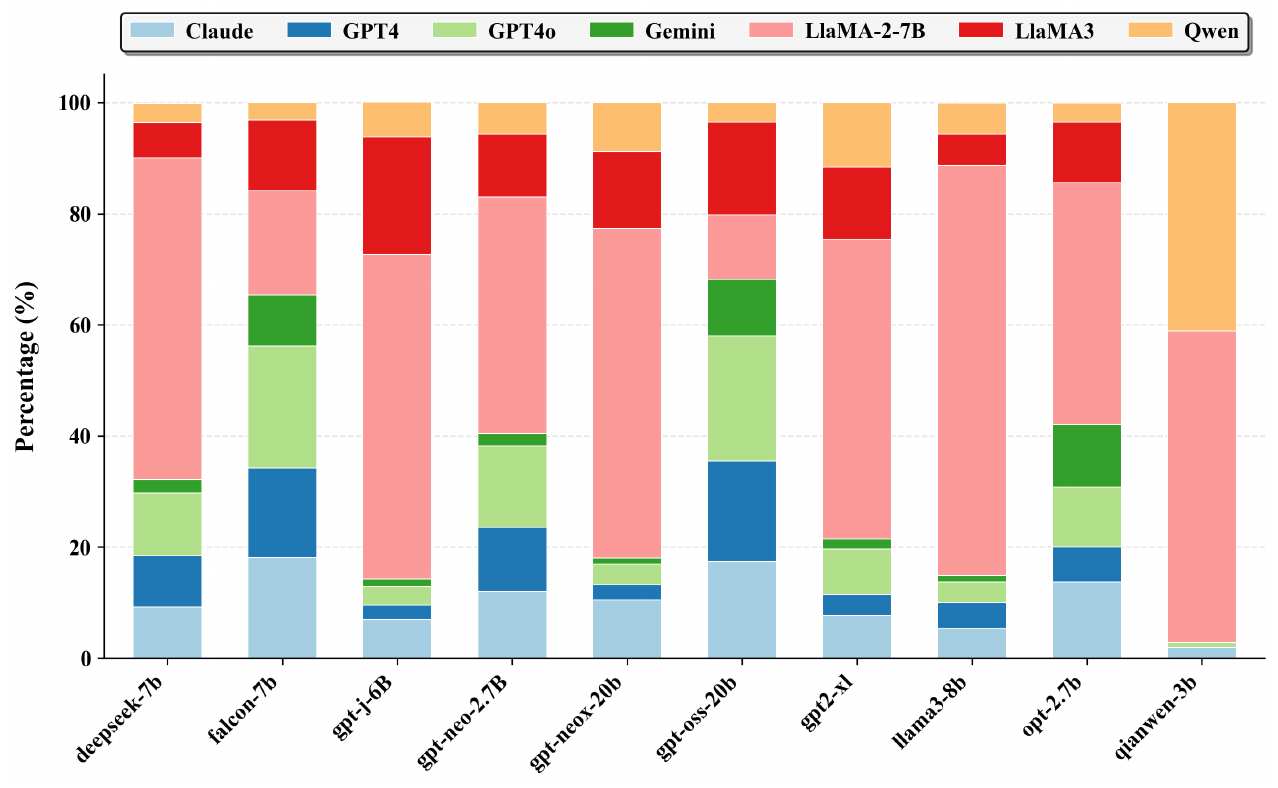}
    \caption{Distribution of selected surrogates for each source model family, confirming source-specific routing preferences.}
    \label{fig:analysis_routing}
\end{figure}

\paragraph{Training Stages and Loss Components.}
Table~\ref{tab:ablation_components} presents ablation results. Both training stages prove essential: Stage 1 Only suffers $-$11.3\% degradation as prototypes trained solely on white-box models fail to generalize, while Stage 2 Only shows a $-$4.1\% gap from lacking well-initialized prototypes. Each loss component contributes meaningfully---replacing KL divergence with $\mathcal{L}_1$ regression causes $-$3.6\% drop, and removing $\mathcal{L}_{sep}$ results in $-$3.5\% degradation. Direct classification bypasses the prototype framework and exhibits $-$5.9\% gap, confirming the advantage of metric-space routing.

\paragraph{Encoder Architecture.}
Table~\ref{tab:ablation_encoder} compares different encoder architectures for the routing network. Among lightweight encoders, RoBERTa-base achieves the best balance, outperforming BERT-base by 1.93\% on EvoBench and 3.03\% on MAGE while maintaining identical inference latency at 4.87ms per sample. LLM2Vec-LLaMA3 achieves the best overall performance at 90.85\% on EvoBench and 77.92\% on MAGE, but requires 166.2ms per sample. This trade-off suggests that for latency-sensitive applications, RoBERTa-base offers an attractive operating point with strong performance at minimal computational overhead.

\paragraph{Routing Pattern Interpretation.}
To validate that our performance gains stem from adaptive alignment rather than simply ensembling, we visualize the distribution of selected surrogates for different source families in Figure~\ref{fig:analysis_routing}. The stacked bar chart reveals that the router does not simply default to a single universal detector for all inputs. Instead, it exhibits distinct, source-specific preferences. For instance, when detecting text from DeepSeek-7B, the router frequently selects structurally similar surrogates, whereas for Falcon-7B, the preference shifts significantly toward a different subset of detectors. This confirms our hypothesis that affinity is a latent, continuous property: the router successfully identifies and exploits the unique statistical signatures of each source.

\begin{figure}[t]
    \centering
    \includegraphics[width=1.0\linewidth]{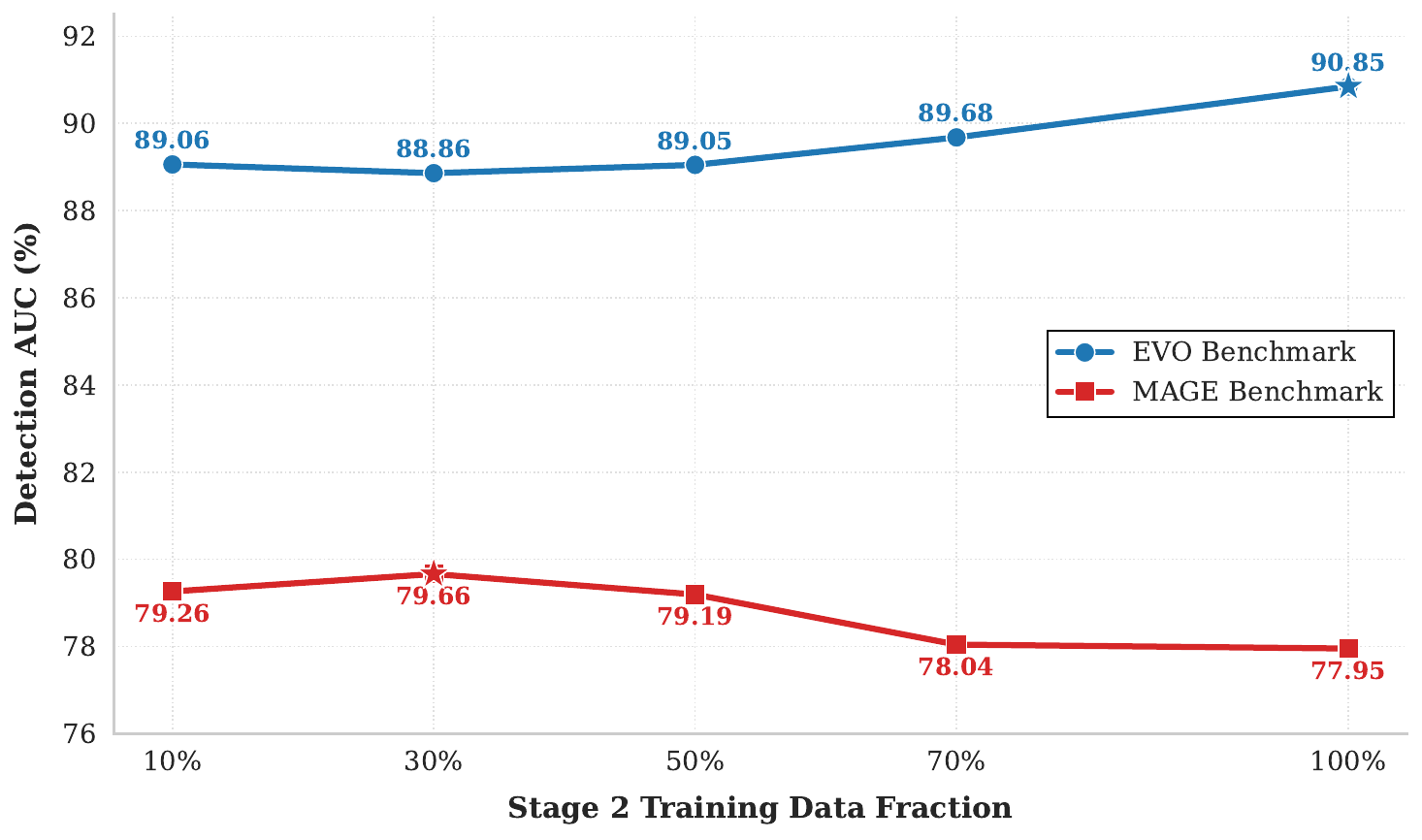}
    \caption{Detection AUC scores on EvoBench and MAGE plotted against the fraction of training data used in Stage~2.}
    \label{fig:data_efficiency}
\end{figure}

\paragraph{Data Efficiency.}
A key practical concern for black-box adaptation is the cost of collecting detection scores. We investigate the data efficiency of DetectRouter by varying the fraction of Stage~2 training data from 10\% to 100\%. As shown in Figure~\ref{fig:data_efficiency}, the model exhibits remarkable stability. On EvoBench, the router achieves an AUC of 89.06\% with only 10\% of the data, within 1.8 percentage points of the performance achieved with the full dataset. This indicates that the prototype space constructed in Stage~1 provides a strong regularization prior, making DetectRouter highly cost-effective for adapting to new models. Additional analysis on prototype space geometry is provided in Appendix~\ref{app:analysis}.

\section{Conclusion}

This work reveals that zero-shot LLM detection is highly sensitive to the distributional alignment between the surrogate model and the unknown source. Through systematic experiments and theoretical analysis, we demonstrate that surrogate-source mismatch leads to substantial performance degradation, and derive a bound showing the gap is governed by KL divergence between distributions. Building on this finding, we propose DetectRouter, the first framework to leverage multiple surrogate models as complementary experts and dynamically route each input to its most compatible detector. Experiments confirm that this adaptive routing strategy yields consistent improvements across all detection criteria and achieves state-of-the-art performance on two benchmarks.

\section*{Impact Statement}

This work aims to advance the reliable detection of machine-generated text, which has broad implications for maintaining information integrity in the age of large language models. Effective detection methods can help combat the spread of AI-generated disinformation, preserve academic integrity, and prevent the contamination of training corpora with synthetic content. Our routing framework improves detection robustness across diverse and evolving LLMs, making it more practical for real-world deployment where the source model is unknown.

We acknowledge that detection technologies exist in an ongoing adversarial dynamic with generation methods. While our work strengthens the defensive capability, determined adversaries may develop countermeasures. We also note that imperfect detectors risk false accusations against human authors, which could have serious consequences in educational or professional contexts. We encourage practitioners to use detection tools as one signal among many rather than as definitive judgments, and to establish appropriate thresholds that balance false positive and false negative rates for their specific applications.

\bibliography{main}
\bibliographystyle{icml2026}

\newpage
\appendix
\onecolumn

\appendix

\section{Theoretical Analysis}
\label{app:theory_proof}

This section provides the complete proof of Proposition~\ref{prop:mismatch_bound}, establishing that the performance gap in zero-shot detection is fundamentally bounded by the distributional divergence between source and surrogate models.

\subsection{Preliminaries}

We introduce the necessary background on probability divergences.

\begin{definition}[Total Variation Distance]
For two probability distributions $P$ and $Q$ defined on a measurable space $(\mathcal{X}, \mathcal{F})$, the total variation distance is
\begin{equation}
    d_{TV}(P, Q) = \sup_{A \in \mathcal{F}} |P(A) - Q(A)| = \frac{1}{2} \int_{\mathcal{X}} |p(x) - q(x)| \, dx
\end{equation}
where $p$ and $q$ denote the probability density functions of $P$ and $Q$, respectively.
\end{definition}

\begin{definition}[Kullback-Leibler Divergence]
For two probability distributions $P$ and $Q$ with $P$ absolutely continuous with respect to $Q$, the KL divergence is
\begin{equation}
    D_{KL}(P \| Q) = \int_{\mathcal{X}} p(x) \log \frac{p(x)}{q(x)} \, dx
\end{equation}
\end{definition}

\begin{lemma}[Pinsker's Inequality]
\label{lemma:pinsker}
For any two probability distributions $P$ and $Q$:
\begin{equation}
    d_{TV}(P, Q) \leq \sqrt{\frac{1}{2} D_{KL}(P \| Q)}
\end{equation}
\end{lemma}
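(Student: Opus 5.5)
We reduce to the two-point case and then handle that case by one-variable calculus. If $D_{KL}(P\|Q)=\infty$ (in particular if $P$ is not absolutely continuous with respect to $Q$) there is nothing to prove, so assume $D_{KL}(P\|Q)<\infty$. Let $A=\{x: p(x)>q(x)\}$. By the definition of total variation given above, the supremum is attained at $A$:
\[
d_{TV}(P,Q)=P(A)-Q(A)=\tfrac12\int|p-q|\,dx .
\]
Write $a=P(A)$ and $b=Q(A)$, and let $P_A=\mathrm{Bern}(a)$, $Q_A=\mathrm{Bern}(b)$ be the pushforwards of $P$ and $Q$ under the indicator map $x\mapsto \mathbf 1_A(x)$.

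\textbf{Step 1: data processing.} We show $D_{KL}(P\|Q)\ge D_{KL}(P_A\|Q_A)$. Split the integral over $A$ and $A^c$. On each piece, apply the log-sum inequality, which is Jensen's inequality for the convex function $t\mapsto t\log t$ applied under the normalized measure $q\,dx/Q(A)$:
\[
\int_A p\log\frac pq\,dx\;\ge\;a\log\frac ab .
\]
The analogous bound holds on $A^c$. Summing the two gives the claim. The boundary cases $b\in\{0,1\}$ are excluded or trivial by finiteness of $D_{KL}$.

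\textbf{Step 2: binary Pinsker.} It then suffices to prove, for $0\le b\le a\le 1$,
\[
f(a,b):=a\log\frac ab+(1-a)\log\frac{1-a}{1-b}-2(a-b)^2\;\ge\;0 .
\]
Fix $a$ and view $f$ as a function of $b\in(0,a]$. At $b=a$ we have $f=0$. Differentiating,
\[
\partial_b f=-\frac ab+\frac{1-a}{1-b}+4(a-b)=(a-b)\Bigl(4-\frac{1}{b(1-b)}\Bigr).
\]
Since $b(1-b)\le \tfrac14$, the second factor is $\le 0$, while $a-b\ge0$. Hence $\partial_b f\le 0$ on $(0,a]$, so $f$ is nonincreasing in $b$ and $f(a,b)\ge f(a,a)=0$.

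\textbf{Step 3: combine.} The previous step gives $D_{KL}(P\|Q)\ge 2(a-b)^2=2\,d_{TV}(P,Q)^2$, which rearranges to the stated bound $d_{TV}(P,Q)\le\sqrt{\tfrac12 D_{KL}(P\|Q)}$.

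\textbf{Main obstacle.} The main care point is Step 1, specifically justifying the log-sum/Jensen step measure-theoretically and handling the sets where $q=0$ or $p=0$ (using the conventions $0\log 0=0$ and absolute continuity). The calculus in Step 2 is routine once the derivative is factored as above.
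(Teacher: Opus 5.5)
Your proof is correct. It is also more than the paper provides: the paper gives no proof of this lemma. It lists Pinsker's inequality among the preliminaries as a ``classical result'' and only invokes it in Step~4 of the proof of the Mismatch Risk Bound.

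What you supply is the standard textbook argument in two moves.
\begin{enumerate}
\item You reduce to the two-point partition $\{A,A^c\}$ with $A=\{p>q\}$ via the log-sum (Jensen) inequality. This is exactly the data-processing inequality for the indicator map.
\item You settle the Bernoulli case by the factorization $\partial_b f=(a-b)\bigl(4-\tfrac{1}{b(1-b)}\bigr)$ together with $b(1-b)\le\tfrac14$.
\end{enumerate}
The algebra checks out. Your handling of the degenerate cases is also sound: when $D_{KL}=\infty$ there is nothing to prove, and $b\in\{0,1\}$ forces $a=b$ by absolute continuity, so $d_{TV}=0$.

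The paper's choice to cite is economical and entirely standard for so well-known an inequality. Your version buys self-containment. It also makes explicit two hypotheses that the paper's phrase ``for any two probability distributions'' glosses over.
\begin{enumerate}
\item The nontrivial case requires $P\ll Q$.
\item The constant $\tfrac12$ presumes natural logarithms, which your derivative computation uses. Measured in bits, $D_{KL}$ only gets larger, so the stated bound still holds.
\end{enumerate}
Finally, your argument goes through verbatim with densities relative to any common dominating measure, for instance counting measure. So it covers the discrete distributions over text that the paper actually has in mind, not just the $dx$-densities written in its definitions.
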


This classical result connects the total variation distance, which is directly useful for bounding expectation differences, to the KL divergence, which has information-theoretic interpretations and is often more tractable to estimate.

\subsection{Proof of the Mismatch Risk Bound}

\begin{proof}
Let $T(x): \mathcal{X} \to \mathbb{R}$ be a zero-shot scoring function with $|T(x)| \leq B$ for all $x \in \mathcal{X}$. Let $P_{src}$ denote the source distribution and $P_{sur}$ denote the surrogate distribution, with corresponding density functions $p_{src}$ and $p_{sur}$.

The optimal detection signal is
\begin{equation}
    \mu^* = \mathbb{E}_{x \sim P_{src}}[T(x)] = \int_{\mathcal{X}} T(x) p_{src}(x) \, dx
\end{equation}

The proxy signal obtained using the surrogate is
\begin{equation}
    \mu_{proxy} = \mathbb{E}_{x \sim P_{sur}}[T(x)] = \int_{\mathcal{X}} T(x) p_{sur}(x) \, dx
\end{equation}

\textit{Step 1: Express the gap as an integral.} The performance gap can be written as
\begin{align}
    |\mu^* - \mu_{proxy}| &= \left| \int_{\mathcal{X}} T(x) p_{src}(x) \, dx - \int_{\mathcal{X}} T(x) p_{sur}(x) \, dx \right| \\
    &= \left| \int_{\mathcal{X}} T(x) \left( p_{src}(x) - p_{sur}(x) \right) dx \right|
\end{align}

\textit{Step 2: Apply the boundedness condition.} Since $|T(x)| \leq B$, we have
\begin{align}
    \left| \int_{\mathcal{X}} T(x) \left( p_{src}(x) - p_{sur}(x) \right) dx \right| &\leq \int_{\mathcal{X}} |T(x)| \cdot |p_{src}(x) - p_{sur}(x)| \, dx \\
    &\leq B \int_{\mathcal{X}} |p_{src}(x) - p_{sur}(x)| \, dx
\end{align}

\textit{Step 3: Relate to total variation distance.} By definition:
\begin{equation}
    \int_{\mathcal{X}} |p_{src}(x) - p_{sur}(x)| \, dx = 2 \cdot d_{TV}(P_{src}, P_{sur})
\end{equation}

Therefore:
\begin{equation}
    |\mu^* - \mu_{proxy}| \leq 2B \cdot d_{TV}(P_{src}, P_{sur})
\end{equation}

\textit{Step 4: Apply Pinsker's inequality.} Using Lemma~\ref{lemma:pinsker}:
\begin{align}
    |\mu^* - \mu_{proxy}| &\leq 2B \cdot d_{TV}(P_{src}, P_{sur}) \\
    &\leq 2B \cdot \sqrt{\frac{1}{2} D_{KL}(P_{src} \| P_{sur})} \\
    &= B \cdot \sqrt{2 D_{KL}(P_{src} \| P_{sur})}
\end{align}

This completes the proof.
\end{proof}

\subsection{Implications for Zero-Shot Detection}

The Mismatch Risk Bound has several important implications for zero-shot LLM detection.

\paragraph{Monotonicity in Divergence.}
The bound is monotonically increasing in $D_{KL}(P_{src} \| P_{sur})$. As the distributional divergence between source and surrogate increases, the potential performance degradation also increases. This theoretical prediction aligns precisely with our empirical observations in Table~\ref{tab:analysis_cost}, where cross-model performance degrades significantly compared to the matched setting.

\paragraph{Optimality of Routing.}
Given a pool of candidate surrogates $\{M_1, M_2, \ldots, M_K\}$ with corresponding distributions $\{P_{M_1}, P_{M_2}, \ldots, P_{M_K}\}$, the bound suggests that the optimal surrogate for a source $P_{src}$ is
\begin{equation}
    M^* = \arg\min_{k \in [K]} D_{KL}(P_{src} \| P_{M_k})
\end{equation}
This minimization tightens the error bound and recovers the best possible detection performance within the surrogate pool, directly motivating our routing approach.

\paragraph{Architectural Affinity.}
The structured patterns observed in Figure~\ref{fig:analysis_confusion}, where models from the same family exhibit higher mutual affinity, can be explained through this framework. Models sharing architectural design choices and pre-training data are likely to have smaller KL divergence, leading to tighter bounds and better cross-detection performance.

\paragraph{Variance in Black-box Settings.}
The extreme performance variance observed in Figure~\ref{fig:intro} corresponds to the varying values of $D_{KL}(P_{src} \| P_{M_k})$ across different surrogates $M_k$. For a fixed black-box source, some surrogates happen to have distributions closer to the source, resulting in high detection AUC, while others have large divergence, leading to near-random performance.

\subsection{Extension to Finite Samples}

In practice, we work with finite samples rather than true distributions. The bound can be extended to account for estimation error using standard concentration inequalities.

\begin{corollary}[Finite Sample Bound]
Let $\hat{\mu}^*$ and $\hat{\mu}_{proxy}$ be empirical estimates of $\mu^*$ and $\mu_{proxy}$ based on $n$ i.i.d.\ samples. Then with probability at least $1 - \delta$:
\begin{equation}
    |\hat{\mu}^* - \hat{\mu}_{proxy}| \leq B \cdot \sqrt{2 D_{KL}(P_{src} \| P_{sur})} + 2B\sqrt{\frac{\log(2/\delta)}{2n}}
\end{equation}
\end{corollary}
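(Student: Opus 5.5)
The proof splits the empirical gap into a population gap and two estimation errors, and then controls each piece separately. By the triangle inequality,
\begin{equation*}
|\hat{\mu}^* - \hat{\mu}_{proxy}| \leq |\mu^* - \mu_{proxy}| + |\hat{\mu}^* - \mu^*| + |\hat{\mu}_{proxy} - \mu_{proxy}|.
\end{equation*}
The first term is handled entirely by Proposition~\ref{prop:mismatch_bound}, proved above. It is at most $B\sqrt{2D_{KL}(P_{src}\|P_{sur})}$ deterministically, with no probabilistic argument needed.

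For the two estimation terms, I take $\hat{\mu}^* = \frac{1}{n}\sum_i T(x_i)$ with $x_i \sim P_{src}$ i.i.d., and $\hat{\mu}_{proxy} = \frac{1}{n}\sum_i T(x_i')$ with $x_i' \sim P_{sur}$ i.i.d. Each summand lies in $[-B,B]$, an interval of length $2B$. Hoeffding's inequality then gives, for each estimate separately,
\begin{equation*}
\Pr\bigl(|\hat{\mu} - \mu| \geq t\bigr) \leq 2\exp\!\left(-\frac{2nt^2}{(2B)^2}\right) = 2\exp\!\left(-\frac{nt^2}{2B^2}\right).
\end{equation*}

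The constants need care, and this is the step I expect to be the main obstacle.

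\begin{itemize}
\item Setting the right-hand side equal to $\delta/2$ per term, to allow a union bound over both estimates, gives $t = B\sqrt{2\log(4/\delta)/n}$. This is off from the stated bound in two ways: the logarithm is $\log(4/\delta)$ rather than $\log(2/\delta)$, and each deviation is already $2B\sqrt{\log(4/\delta)/(2n)}$. Summing the two deviations would give $4B\sqrt{\cdot}$, not the stated $2B\sqrt{\cdot}$.
\item To reach exactly $2B\sqrt{\log(2/\delta)/(2n)}$, the cleanest route is a single Hoeffding bound on the difference. If the two samples are drawn as $n$ independent pairs $(x_i, x_i')$, then $\hat{\mu}^* - \hat{\mu}_{proxy}$ is an average of $n$ i.i.d. variables $T(x_i) - T(x_i')$. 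These lie in $[-2B, 2B]$, so their range is $4B$.
\item One Hoeffding application to the difference gives deviation $4B\sqrt{\log(2/\delta)/(2n)}$ with probability $1-\delta$. This still has the constant $4B$ rather than $2B$.
\end{itemize}

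So the stated constant appears to require either reading $B$ as bounding the range of $T$, i.e. $T \in [0,B]$, or reading $\hat{\mu}^*$ as a known target, so that only one side is estimated. In the latter case a single Hoeffding bound with range $2B$ gives exactly $2B\sqrt{\log(2/\delta)/(2n)}$.

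I would therefore proceed as follows. First, adopt the one-sided interpretation, or note that under $T\in[0,B]$ the paired-difference argument gives precisely the stated form. Second, combine that deviation with Proposition~\ref{prop:mismatch_bound} via the triangle inequality. Third, remark that under the fully general two-sample reading the same argument holds with the constant inflated to $4B$. The structure of the result is unchanged in either case: a population mismatch term plus an $O(B\sqrt{\log(1/\delta)/n})$ sampling term.
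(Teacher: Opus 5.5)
Your route (triangle inequality, Proposition~\ref{prop:mismatch_bound} for the population gap, Hoeffding for the sampling error) is the argument the paper gestures at. The paper gives no proof beyond saying the bound follows from ``standard concentration inequalities.'' Its extra term $2B\sqrt{\log(2/\delta)/(2n)}$ is exactly the Hoeffding radius for a \emph{single} empirical mean of a $[-B,B]$-valued statistic at confidence $1-\delta$. So it implicitly accounts for only one estimation error, as in your one-sided reading.

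Your constant bookkeeping is right, and the discrepancy is not an artifact of technique. Under the natural reading (independent size-$n$ samples from $P_{src}$ and $P_{sur}$, $|T|\le B$) the stated inequality is false. Take $P_{src}=P_{sur}$, so the KL term vanishes, and let $T$ equal $\pm B$ with probability $1/2$ each. Then $\hat{\mu}^*-\hat{\mu}_{proxy}$ has mean $0$ and standard deviation $B\sqrt{2/n}$. The claimed radius $B\sqrt{2\log(2/\delta)/n}$ is only $\sqrt{\log(2/\delta)}$ standard deviations. By the central limit theorem the failure probability tends to $2\Phi(-\sqrt{\log(2/\delta)})$, with $\Phi$ the standard normal CDF. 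This is about $0.021$ at $\delta=0.01$ and decays only like $\sqrt{\delta}$ up to logarithmic factors. So one of your reinterpretations is genuinely needed, and your arguments for both are correct.

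A small refinement of your fallback: with independent samples, apply Hoeffding directly to the $2n$ independent summands $T(x_i)$ and $-T(x_i')$, each of range $2B$. This gives radius $2\sqrt{2}\,B\sqrt{\log(2/\delta)/(2n)}$ rather than the $4B$ from pairing.
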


The additional term captures the statistical uncertainty from finite sampling, which vanishes as $n \to \infty$.

\subsection{Connection to Information Theory}

The appearance of KL divergence in the bound is not coincidental. From an information-theoretic perspective, $D_{KL}(P_{src} \| P_{sur})$ quantifies the information lost when using $P_{sur}$ to approximate $P_{src}$. In the context of LLM detection, this information loss directly translates to degraded detection performance, as the surrogate model fails to capture the distinctive statistical patterns of the source.

This connection suggests that effective routing can be viewed as an information-theoretic channel matching problem: we seek to route each input to the surrogate that minimizes the information loss, thereby maximizing the fidelity of the detection signal.

\section{Additional Experimental Results}
\label{app:exp_details}

This section provides comprehensive experimental results that complement the main paper.

\subsection{Full MAGE Benchmark Results}
\label{app:mage_full}

Table~\ref{tab:mage_full} presents the complete per-family breakdown on the MAGE benchmark. The results reveal substantial heterogeneity in baseline performance across model families. For instance, Likelihood achieves 92.96\% on LLaMA-family text but only 0.26\% on EleutherAI, reflecting severe surrogate-source mismatch when architectures differ substantially. Our routing mechanism effectively mitigates this variance by dynamically selecting appropriate surrogates.

\begin{table*}[t]
\centering
\caption{Detection performance on MAGE across eight model families. AUROC is reported in percentage. The best result in each column is highlighted in bold.}
\label{tab:mage_full}
\resizebox{\textwidth}{!}{
\begin{tabular}{l|cccccccc|c}
\toprule
Method & OpenAI & GLM & Human-Para & FLAN & LLaMA & BigScience & OPT & EleutherAI & Avg \\
\midrule
Likelihood & 89.12 & 91.30 & 60.61 & 30.98 & \textbf{92.96} & 37.51 & 31.21 & 0.26 & 56.81 \\
Entropy & 23.08 & 17.98 & 62.83 & 66.65 & 20.30 & 66.22 & 76.39 & 81.76 & 51.90 \\
Rank & 73.74 & 83.12 & 53.58 & 34.93 & 81.74 & 45.49 & 52.75 & 46.63 & 58.99 \\
LogRank & 87.05 & 90.28 & 53.81 & 31.56 & 91.69 & 37.38 & 32.39 & 23.10 & 55.91 \\
LLR & 72.90 & 74.24 & 36.21 & 36.17 & 78.09 & 39.97 & 40.17 & 36.52 & 51.78 \\
Fast-DetectGPT & 76.40 & 80.41 & 61.92 & \textbf{76.29} & 67.35 & 72.40 & 64.61 & 58.12 & 69.69 \\
\midrule
Lastde & 74.47 & 83.87 & 62.96 & 66.35 & 72.01 & 64.63 & 72.35 & 69.07 & 70.71 \\
FourierGPT & 59.31 & 84.88 & 68.32 & 4.72 & 84.39 & 39.60 & 57.55 & 63.97 & 60.34 \\
Diveye & 87.89 & \textbf{92.61} & 62.10 & 34.65 & 93.80 & 52.00 & 67.60 & 66.11 & 69.60 \\
UCE & 18.81 & 20.02 & 70.60 & 67.05 & 23.48 & 62.86 & \textbf{82.81} & \textbf{87.73} & 54.17 \\
\midrule
\rowcolor{gray!20}
DetectRouter & \textbf{93.24} & 90.37 & \textbf{88.52} & 76.66 & 72.55 & \textbf{74.65} & 67.56 & 59.81 & \textbf{77.92} \\
\bottomrule
\end{tabular}
}
\end{table*}

\subsection{Full Routing Enhancement Results}
\label{app:routing_details}

Tables~\ref{tab:routing_evo_full} and~\ref{tab:routing_mage_full} present the complete per-family results for applying DetectRouter to each zero-shot detection method on EvoBench and MAGE, respectively.

\begin{table*}[t]
\centering
\caption{Routing enhancement on EvoBench. Per-family AUROC is reported in percentage for each detection method with and without routing. $\Delta$ denotes relative improvement from applying DetectRouter.}
\label{tab:routing_evo_full}
\resizebox{\textwidth}{!}{
\begin{tabular}{l|ccccccc|c|c}
\toprule
Method & LLaMA3 & LLaMA2 & Claude & GPT-4o & Gemini & GPT-4 & Qwen & Avg & $\Delta$ \\
\midrule
Likelihood & 88.84 & 61.68 & 79.70 & 77.80 & 78.89 & 77.80 & 77.40 & 77.44 & \multirow{2}{*}{+5.4\%} \\
\cellcolor{gray!20}Likelihood+Ours & \cellcolor{gray!20}90.58 & \cellcolor{gray!20}66.12 & \cellcolor{gray!20}84.56 & \cellcolor{gray!20}85.92 & \cellcolor{gray!20}84.20 & \cellcolor{gray!20}84.44 & \cellcolor{gray!20}75.31 & \cellcolor{gray!20}81.59 & \\
\midrule
Entropy & 28.85 & 57.58 & 33.61 & 35.43 & 38.34 & 38.10 & 30.34 & 37.46 & \multirow{2}{*}{+139.4\%} \\
\cellcolor{gray!20}Entropy+Ours & \cellcolor{gray!20}94.78 & \cellcolor{gray!20}80.20 & \cellcolor{gray!20}92.75 & \cellcolor{gray!20}92.24 & \cellcolor{gray!20}92.36 & \cellcolor{gray!20}92.59 & \cellcolor{gray!20}83.01 & \cellcolor{gray!20}89.70 & \\
\midrule
Rank & 68.49 & 58.38 & 64.50 & 64.64 & 64.46 & 63.43 & 55.06 & 62.71 & \multirow{2}{*}{+17.0\%} \\
\cellcolor{gray!20}Rank+Ours & \cellcolor{gray!20}78.13 & \cellcolor{gray!20}69.24 & \cellcolor{gray!20}75.01 & \cellcolor{gray!20}76.35 & \cellcolor{gray!20}74.34 & \cellcolor{gray!20}75.06 & \cellcolor{gray!20}65.48 & \cellcolor{gray!20}73.37 & \\
\midrule
LogRank & 86.47 & 59.99 & 77.37 & 76.80 & 75.99 & 74.87 & 76.10 & 75.37 & \multirow{2}{*}{+9.5\%} \\
\cellcolor{gray!20}LogRank+Ours & \cellcolor{gray!20}90.26 & \cellcolor{gray!20}70.31 & \cellcolor{gray!20}83.82 & \cellcolor{gray!20}86.07 & \cellcolor{gray!20}84.35 & \cellcolor{gray!20}84.47 & \cellcolor{gray!20}78.67 & \cellcolor{gray!20}82.56 & \\
\midrule
LLR & 66.46 & 49.61 & 61.73 & 60.57 & 59.12 & 58.43 & 63.08 & 59.86 & \multirow{2}{*}{+33.2\%} \\
\cellcolor{gray!20}LLR+Ours & \cellcolor{gray!20}85.97 & \cellcolor{gray!20}71.96 & \cellcolor{gray!20}80.05 & \cellcolor{gray!20}81.57 & \cellcolor{gray!20}81.46 & \cellcolor{gray!20}80.66 & \cellcolor{gray!20}76.68 & \cellcolor{gray!20}79.76 & \\
\midrule
Fast-DetectGPT & 94.17 & 80.27 & 81.57 & 74.89 & 80.48 & 75.58 & 80.10 & 81.01 & \multirow{2}{*}{+12.1\%} \\
\cellcolor{gray!20}Fast-DetectGPT+Ours & \cellcolor{gray!20}95.66 & \cellcolor{gray!20}80.82 & \cellcolor{gray!20}94.22 & \cellcolor{gray!20}94.29 & \cellcolor{gray!20}93.79 & \cellcolor{gray!20}94.16 & \cellcolor{gray!20}83.01 & \cellcolor{gray!20}90.85 & \\
\bottomrule
\end{tabular}
}
\end{table*}

\begin{table*}[t]
\centering
\caption{Routing enhancement on MAGE. Per-family AUROC is reported in percentage for each detection method with and without routing. $\Delta$ denotes relative improvement from applying DetectRouter.}
\label{tab:routing_mage_full}
\resizebox{\textwidth}{!}{
\begin{tabular}{l|cccccccc|c|c}
\toprule
Method & OpenAI & GLM & Human-Para & FLAN & LLaMA & BigScience & OPT & EleutherAI & Avg & $\Delta$ \\
\midrule
Likelihood & 89.12 & 91.30 & 60.61 & 30.98 & 92.96 & 37.51 & 31.21 & 0.26 & 56.81 & \multirow{2}{*}{+14.7\%} \\
\cellcolor{gray!20}Likelihood+Ours & \cellcolor{gray!20}91.67 & \cellcolor{gray!20}85.68 & \cellcolor{gray!20}83.46 & \cellcolor{gray!20}54.41 & \cellcolor{gray!20}72.62 & \cellcolor{gray!20}55.02 & \cellcolor{gray!20}47.54 & \cellcolor{gray!20}30.85 & \cellcolor{gray!20}65.16 & \\
\midrule
Entropy & 23.08 & 17.98 & 62.83 & 66.65 & 20.30 & 66.22 & 76.39 & 81.76 & 51.90 & \multirow{2}{*}{+26.5\%} \\
\cellcolor{gray!20}Entropy+Ours & \cellcolor{gray!20}76.16 & \cellcolor{gray!20}42.35 & \cellcolor{gray!20}86.80 & \cellcolor{gray!20}68.37 & \cellcolor{gray!20}33.29 & \cellcolor{gray!20}68.62 & \cellcolor{gray!20}73.12 & \cellcolor{gray!20}76.37 & \cellcolor{gray!20}65.64 & \\
\midrule
Rank & 73.74 & 83.12 & 53.58 & 34.93 & 81.74 & 45.49 & 52.75 & 46.63 & 58.99 & \multirow{2}{*}{+15.6\%} \\
\cellcolor{gray!20}Rank+Ours & \cellcolor{gray!20}83.54 & \cellcolor{gray!20}85.74 & \cellcolor{gray!20}79.94 & \cellcolor{gray!20}53.48 & \cellcolor{gray!20}74.40 & \cellcolor{gray!20}58.91 & \cellcolor{gray!20}61.61 & \cellcolor{gray!20}47.90 & \cellcolor{gray!20}68.19 & \\
\midrule
LogRank & 87.05 & 90.28 & 53.81 & 31.56 & 91.69 & 37.38 & 32.39 & 23.10 & 55.91 & \multirow{2}{*}{+22.8\%} \\
\cellcolor{gray!20}LogRank+Ours & \cellcolor{gray!20}91.94 & \cellcolor{gray!20}90.39 & \cellcolor{gray!20}80.81 & \cellcolor{gray!20}54.65 & \cellcolor{gray!20}79.04 & \cellcolor{gray!20}58.98 & \cellcolor{gray!20}54.19 & \cellcolor{gray!20}39.07 & \cellcolor{gray!20}68.63 & \\
\midrule
LLR & 72.90 & 74.24 & 36.21 & 36.17 & 78.09 & 39.97 & 40.17 & 36.52 & 51.78 & \multirow{2}{*}{+35.4\%} \\
\cellcolor{gray!20}LLR+Ours & \cellcolor{gray!20}89.40 & \cellcolor{gray!20}90.17 & \cellcolor{gray!20}72.46 & \cellcolor{gray!20}59.63 & \cellcolor{gray!20}81.28 & \cellcolor{gray!20}61.35 & \cellcolor{gray!20}59.98 & \cellcolor{gray!20}46.77 & \cellcolor{gray!20}70.13 & \\
\midrule
Fast-DetectGPT & 76.40 & 80.41 & 61.92 & 76.29 & 67.35 & 72.40 & 64.61 & 58.12 & 69.69 & \multirow{2}{*}{+11.8\%} \\
\cellcolor{gray!20}Fast-DetectGPT+Ours & \cellcolor{gray!20}93.24 & \cellcolor{gray!20}90.37 & \cellcolor{gray!20}88.52 & \cellcolor{gray!20}76.66 & \cellcolor{gray!20}72.55 & \cellcolor{gray!20}74.65 & \cellcolor{gray!20}67.56 & \cellcolor{gray!20}59.81 & \cellcolor{gray!20}77.92 & \\
\bottomrule
\end{tabular}
}
\end{table*}

\subsection{Supervised Baseline Results}
\label{app:supervised}

We evaluate supervised detection using each surrogate model's representations trained on Stage~2 data for binary classification. Tables~\ref{tab:supervised_evo} and~\ref{tab:supervised_mage} present the complete results on EvoBench and MAGE, respectively. Performance varies substantially across surrogates, demonstrating that supervised approaches are highly sensitive to the choice of representation model. For instance, DeepSeek-7B achieves 79.30\% on EvoBench while Falcon-7B only reaches 48.27\%, highlighting the importance of surrogate selection even in supervised settings. DetectRouter consistently outperforms all single-surrogate supervised baselines.

\begin{table*}[t]
\centering
\caption{Supervised detection results on EvoBench using different surrogate representations trained on Stage~2 data. AUROC is reported in percentage.}
\label{tab:supervised_evo}
\resizebox{\textwidth}{!}{
\begin{tabular}{l|ccccccc|c}
\toprule
Surrogate & LLaMA3 & LLaMA2 & Claude & GPT-4o & Gemini & GPT-4 & Qwen & Avg \\
\midrule
LLaMA3-8B & 74.96 & 66.88 & 73.38 & 77.66 & 62.07 & 76.16 & 66.49 & 71.09 \\
DeepSeek-7B & 85.71 & 68.16 & 83.89 & 84.64 & 74.05 & 84.85 & 73.81 & 79.30 \\
Falcon-7B & 43.47 & 56.67 & 42.32 & 46.58 & 50.48 & 47.86 & 50.54 & 48.27 \\
GPT2-XL & 61.16 & 61.81 & 65.08 & 62.51 & 74.42 & 62.08 & 61.03 & 64.01 \\
GPT-J-6B & 73.92 & 61.72 & 72.85 & 73.74 & 54.55 & 72.66 & 68.78 & 68.32 \\
GPT-Neo-2.7B & 69.42 & 62.62 & 73.42 & 75.20 & 69.41 & 73.81 & 66.46 & 70.05 \\
OPT-2.7B & 72.79 & 64.32 & 68.75 & 72.12 & 68.17 & 72.16 & 65.64 & 69.14 \\
Qwen-3B & 82.22 & 64.96 & 81.18 & 80.59 & 76.22 & 80.81 & 69.56 & 76.51 \\
GPT-oss-20B & 50.07 & 57.80 & 47.90 & 51.15 & 50.72 & 51.93 & 60.73 & 52.90 \\
GPT-NeoX-20B & 57.93 & 56.44 & 58.39 & 57.56 & 57.37 & 57.74 & 55.89 & 57.33 \\
\midrule
\rowcolor{gray!20}
DetectRouter & \textbf{95.66} & \textbf{80.82} & \textbf{94.22} & \textbf{94.29} & \textbf{93.79} & \textbf{94.16} & \textbf{83.01} & \textbf{90.85} \\
\bottomrule
\end{tabular}
}
\end{table*}

\begin{table*}[t]
\centering
\caption{Supervised detection results on MAGE using different surrogate representations trained on Stage~2 data. AUROC is reported in percentage.}
\label{tab:supervised_mage}
\resizebox{\textwidth}{!}{
\begin{tabular}{l|cccccccc|c}
\toprule
Surrogate & OpenAI & GLM & Human-Para & FLAN & LLaMA & BigScience & OPT & EleutherAI & Avg \\
\midrule
LLaMA3-8B & 83.11 & 81.82 & 82.15 & 75.26 & 72.13 & 74.51 & 64.41 & 64.15 & 74.69 \\
DeepSeek-7B & 89.38 & 82.26 & 86.84 & 74.23 & 71.53 & 75.80 & 71.21 & 69.78 & 77.63 \\
Falcon-7B & 48.88 & 77.60 & 72.32 & 39.83 & 75.61 & 47.72 & 59.56 & 61.87 & 60.42 \\
GPT2-XL & 52.34 & 51.10 & 49.01 & 48.28 & 50.41 & 46.60 & 39.82 & 34.43 & 46.50 \\
GPT-J-6B & 81.61 & 77.60 & 78.14 & 63.61 & 70.17 & 62.57 & 50.07 & 49.31 & 66.64 \\
GPT-Neo-2.7B & 60.85 & 75.08 & 60.78 & 48.37 & 71.03 & 53.22 & 51.28 & 48.25 & 58.61 \\
OPT-2.7B & 78.13 & 74.96 & 74.66 & 67.43 & 67.32 & 62.51 & 52.10 & 48.35 & 65.68 \\
Qwen-3B & 88.66 & 81.14 & 80.99 & 68.58 & 68.20 & 65.34 & 62.09 & 56.71 & 71.46 \\
GPT-oss-20B & 53.76 & 61.59 & 63.87 & 44.68 & 59.36 & 51.26 & 56.76 & 63.29 & 56.82 \\
GPT-NeoX-20B & 71.42 & 68.24 & 69.67 & 64.51 & 65.65 & 64.96 & 52.24 & 52.18 & 63.61 \\
\midrule
\rowcolor{gray!20}
DetectRouter & \textbf{93.24} & \textbf{90.37} & \textbf{88.52} & \textbf{76.66} & 72.55 & \textbf{74.65} & \textbf{67.56} & 59.81 & \textbf{77.92} \\
\bottomrule
\end{tabular}
}
\end{table*}

\subsection{Prototype Space Geometry}
\label{app:analysis}

To verify the quality of the learned metric space, we visualize test set embeddings using t-SNE in Figure~\ref{fig:tsne_appendix}. The visualization reveals two critical geometric properties. First, texts generated by different model families form distinct, well-separated clusters, validating that our contrastive learning objective captures model-specific artifacts. Second, the learned prototypes are accurately positioned at the centroids of their respective source clusters. GPT-J and GPT-Neo clusters appear adjacent, reflecting their shared architectural lineage, yet remain distinguishable. This coherent structure confirms that DetectRouter learns a semantic manifold where proximity correlates with detector affinity, enabling robust generalization.

\begin{figure}[t]
    \centering
    \includegraphics[width=0.7\linewidth]{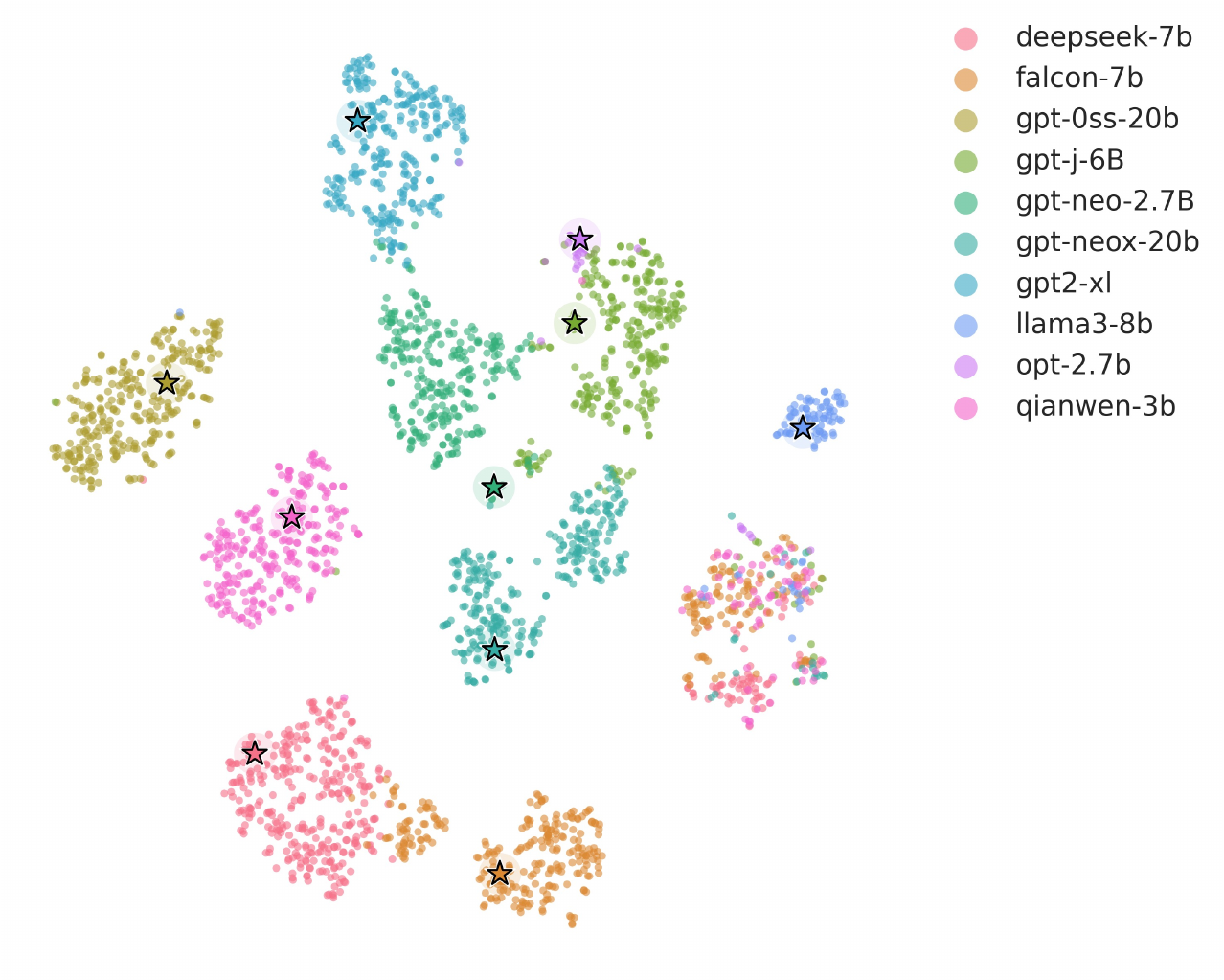}
    \caption{T-SNE projection of the embedding space showing text samples from different source models alongside their corresponding prototypes.}
    \label{fig:tsne_appendix}
\end{figure}

\section{Additional Analysis Visualizations}
\label{app:analysis_figures}

This section provides additional visualizations that complement the analysis in Section~\ref{sec:analysis}, illustrating the fundamental challenges that motivate our routing approach.

\subsection{Performance Gap Between Matched and Mismatched Settings.}
\label{app:analysis_white}
Figure~\ref{fig:analysis_gap} visualizes the performance degradation when moving from white-box to cross-model detection settings. For each of the five zero-shot detection methods, we compare the AUC achieved when the surrogate matches the source model versus when the surrogate is mismatched. The cyan bars represent white-box performance where the detector has access to the true source model, while the pink bars show the average cross-model performance with error bars indicating variance across different surrogate choices. The results reveal a consistent and substantial performance drop across all methods. Likelihood-based detection drops from 0.81 to 0.47, falling below random chance. LogRank exhibits the most dramatic degradation, plummeting from 0.85 to 0.50. Even Fast-DetectGPT, the most robust method, experiences a 20-point decline from 0.94 to 0.74. The large error bars in the cross-model setting further highlight the instability of fixed-surrogate approaches, where performance varies dramatically depending on the arbitrary choice of surrogate.

\begin{figure}[t]
    \centering
    \includegraphics[width=0.9\linewidth]{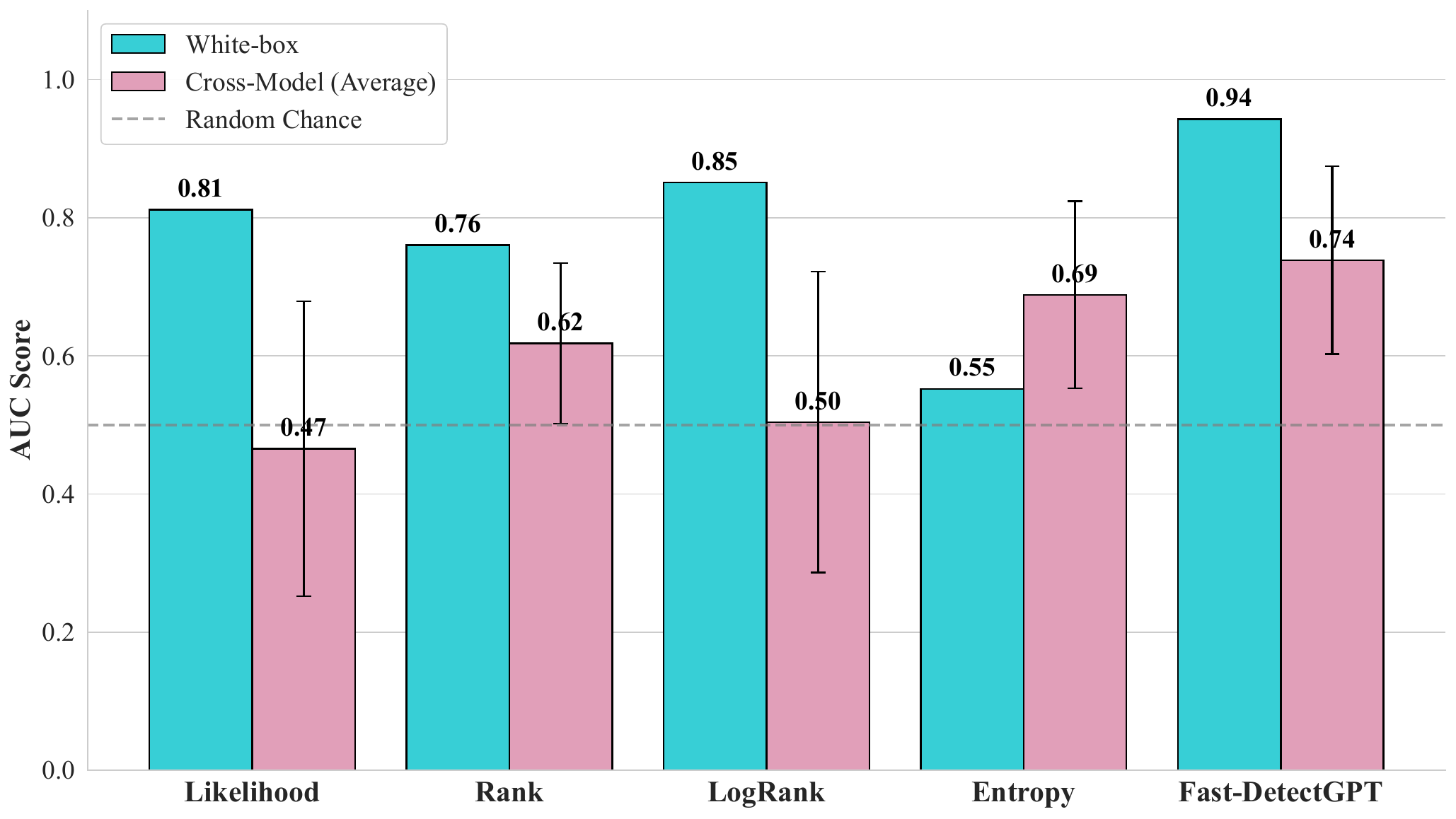}
    \caption{The performance gap across detection methods. Comparison of white-box versus cross-model detection performance across five zero-shot metrics on XSum. The drastic drop in mean AUC and the increase in variance for the cross-model setting demonstrate that fixed surrogates systematically fail to generalize. This visualization corresponds to the quantitative analysis in Table~\ref{tab:analysis_cost}.}
    \label{fig:analysis_gap}
\end{figure}

\subsection{Variance in Black-Box Detection Performance.}
\label{app:analysis_black}
Figure~\ref{fig:blackbox_variance} provides a detailed examination of detection performance variance when targeting black-box source models. Using the Fast-DetectGPT criterion on the MIRAGE benchmark, we evaluate each of our ten open-source surrogates against text generated by nine proprietary black-box model families: Claude, DeepSeek, Doubao, GPT, Llama, Qwen, Gemini, Grok, and Moonshot. Each column represents a different black-box source, with individual data points showing the AUC achieved by each surrogate. The box plots summarize the distribution of surrogate performance for each source. Two critical observations emerge from this visualization. First, optimal surrogates exist for every black-box source, with maximum AUC values ranging from 0.92 for Claude to 1.00 for Llama, demonstrating that effective detection is achievable given the right surrogate selection. Second, the performance variance is extreme, with AUC spreads exceeding 0.70 for several sources. DeepSeek exhibits the widest range from 0.15 to 0.99, while GPT spans from 0.30 to 0.96. This variance confirms that surrogate selection is not merely an optimization but a fundamental requirement for reliable detection. A random surrogate choice frequently yields near-random performance, as indicated by values approaching the 0.5 baseline shown by the red dashed line.

\begin{figure}[t]
    \centering
    \includegraphics[width=0.9\linewidth]{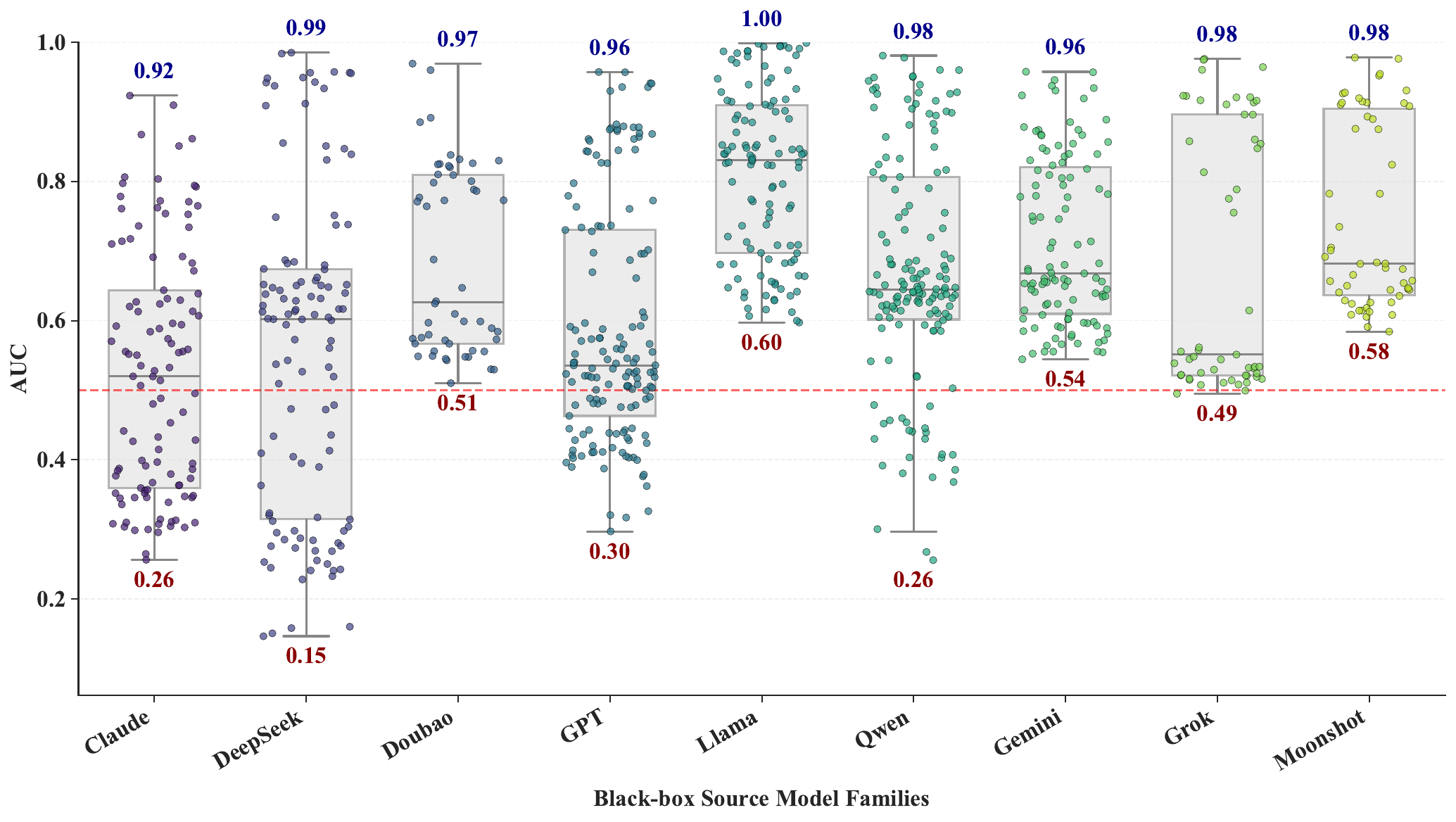}
    \caption{Detailed black-box performance variance. Evaluations on the MIRAGE benchmark~\citep{fu2025detectanyllm} using the Fast-DetectGPT criterion. Each data point represents the AUC achieved by a specific open-source surrogate when detecting text from a black-box generator family. The large vertical spread within each column reveals that for any unknown source, there exists an optimal surrogate in our pool achieving high AUC, but a random choice is likely to fail with AUC near the red dashed baseline. This extreme variance motivates the need for adaptive surrogate selection.}
    \label{fig:blackbox_variance}
\end{figure}

\section{Experimental Setup Details}
\label{app:exp_setup}

This section provides comprehensive details on training data construction, model configurations, and evaluation protocols.

\subsection{Training Data Construction}

\paragraph{Dataset Overview.}
Our training is based on the MIRAGE benchmark~\citep{fu2025detectanyllm}, the most comprehensive multi-task machine-generated text detection evaluation framework to date. MIRAGE samples human-written text from ten corpora across five domains: Academic from ArXiv and NeurIPS, News from CNN DailyMail and XSum, Comment from Amazon Review, Email from Enron Emails, and Website from OpenWebText. The benchmark employs 17 mainstream LLMs comprising 13 proprietary and 4 open-source models, and defines three generation tasks: \textit{Generate} for creating new text based on the first 30 tokens of human-written text, \textit{Polish} for refining existing text while preserving details and meaning, and \textit{Rewrite} for paraphrasing without altering semantic content. All texts are cleaned by removing special characters and filtered to 90--220 words to control for length-based detection biases.

\paragraph{Stage 1: Prototype Training Data.}
For discriminative prototype construction, we synthesize router-aligned training data based on the MIRAGE-DIG subset with disjoint-input generation. For each entry, we treat the original field as ground-truth human text and prompt locally cached language models to produce outputs for the Generate, Rewrite, and Polish tasks. The resulting records follow the same JSON schema as the source data for seamless integration into the existing pipeline. We generate 10,000 samples per task category, yielding 30,000 training instances in total.

\paragraph{Stage 2: Distance Regression Data.}
For black-box generalization training, we directly utilize the MIRAGE benchmark with detection scores annotated by running each text through our detector pool. Each sample is scored by all combinations of surrogate models and detection criteria, producing a score vector that serves as the regression target for learning text-detector affinity patterns.

\subsection{Evaluation Benchmarks}

We evaluate on two representative in-the-wild benchmarks that together assess cross-model generalization and temporal robustness.

\paragraph{MAGE Benchmark.}
MAGE~\citep{li2024mage} provides broad coverage across five domains including QA, Comment, News, Academic, and Story with diverse source models, representing established evaluation protocols in machine-generated text detection. Table~\ref{tab:mage_stats} summarizes the test set organization by source model family. The benchmark comprises 30,265 test pairs spanning eight model families, enabling comprehensive evaluation of cross-model generalization.

\begin{table}[t]
\centering
\caption{MAGE benchmark test set statistics by model family.}
\label{tab:mage_stats}
\small
\begin{tabular}{lr}
\toprule
Model Family & Test Pairs \\
\midrule
OpenAI & 8,180 \\
OPT & 7,956 \\
FLAN & 4,629 \\
LLaMA & 3,684 \\
BigScience & 2,672 \\
EleutherAI & 1,436 \\
GLM & 913 \\
Human Paraphrase & 795 \\
\midrule
Total & 30,265 \\
\bottomrule
\end{tabular}
\end{table}

\paragraph{EvoBench Benchmark.}
EvoBench~\citep{yu2025evobench} specifically targets the evolving nature of LLMs, capturing both updates from version changes by publishers and developments from optimizations by developers such as fine-tuning and pruning. GPT-4o, for example, receives updates approximately every three months. The benchmark covers seven LLM families across 29 evolving versions, with five domain datasets spanning XSum, Writing, PubMed, SocialMedia, and PeerRead across continuation-based, question-answering, and paraphrase generation paradigms. This benchmark directly tests whether routing mechanisms maintain robustness as source models evolve, a critical requirement for real-world deployment. Table~\ref{tab:evobench_stats} presents the test set statistics.

\begin{table}[t]
\centering
\caption{EvoBench benchmark test set statistics by model family. Datasets indicates the number of model-version and domain combinations.}
\label{tab:evobench_stats}
\small
\begin{tabular}{lrr}
\toprule
Model Family & Datasets & Test Pairs \\
\midrule
LLaMA-2-7B & 74 & 11,100 \\
GPT4o & 35 & 5,250 \\
Claude & 30 & 4,500 \\
GPT4 & 28 & 4,200 \\
LLaMA3 & 28 & 4,200 \\
Gemini & 15 & 2,243 \\
Qwen & 10 & 1,500 \\
\midrule
Total & 220 & 32,993 \\
\bottomrule
\end{tabular}
\end{table}

\paragraph{Evaluation Metric.}
Following standard practice in machine-generated text detection~\citep{mitchell2023detectgpt,bao2024fastdetectgpt,fu2025detectanyllm}, we adopt the Area Under the Receiver Operating Characteristic Curve as our primary evaluation metric. AUROC measures the probability that a randomly chosen machine-generated text receives a higher detection score than a randomly chosen human-written text, providing a threshold-independent assessment of detector discriminability. This metric ranges from 0.5 for random guessing to 1.0 for perfect separation.

\subsection{Surrogate Model Pool}
\label{app:models}

Our white-box surrogate pool comprises ten open-source language models spanning diverse architectures, training corpora, and parameter scales from 1.5B to 20B parameters.

DeepSeek-7B~\citep{deepseek2023llm} is a 7B parameter dense transformer optimized for efficient inference. Falcon-7B~\citep{almazrouei2023falcon} contains 7B parameters and was trained on RefinedWeb with multi-query attention. GPT-oss-20B~\citep{openai2025gptoss20b} is a 20B parameter open-source GPT variant with standard architecture. GPT-J-6B~\citep{wang2021gptj} is an autoregressive transformer with 6B parameters trained on The Pile by EleutherAI. GPT-Neo-2.7B~\citep{black2021gptneo} is a 2.7B parameter EleutherAI model with local attention patterns. GPT-NeoX-20B~\citep{biderman2023gptneox20b} contains 20B parameters with rotary position embeddings and parallelized attention. GPT2-XL~\citep{radford2019language} is the 1.5B parameter OpenAI baseline trained on WebText. LLaMA3-8B~\citep{dubey2024llama3} is an 8B parameter Meta instruction-tuned model with grouped-query attention. OPT-2.7B~\citep{zhang2022opt} is a 2.7B parameter Meta open pre-trained transformer. Qwen-3B~\citep{bai2023qwen} is a 3B parameter Alibaba multilingual model.

This diversity ensures that learned prototypes capture a broad spectrum of generation patterns across different model families, scales, and architectural choices.

\subsection{Detection Methods}

We evaluate six zero-shot detection criteria, each representing a different statistical perspective on distinguishing machine-generated from human-written text.

Likelihood~\citep{solaiman2019release} computes the average log-probability of tokens under the surrogate model, based on the intuition that machine-generated text tends to have higher likelihood under language models. Entropy measures token-level predictive uncertainty, which is typically higher for human text due to its greater unpredictability. Rank~\citep{gehrmann2019gltr} computes the average vocabulary rank of generated tokens, where machine-generated text tends to use higher-ranked tokens. LogRank~\citep{mitchell2023detectgpt} applies logarithmic transformation to token ranks for improved stability. LLR~\citep{su2023detectllm} computes the log-likelihood ratio between surrogate and reference models to capture relative probability differences. Fast-DetectGPT~\citep{bao2024fastdetectgpt} approximates conditional probability curvature without requiring expensive perturbations.

For each criterion, we compare the best fixed-surrogate baseline against our routed variant, which dynamically selects the optimal surrogate per input.

\subsection{Router Encoder Architectures}

We explore three encoder architectures for the routing network, trading off between computational efficiency and representational capacity.

RoBERTa-base~\citep{liu2019roberta} is our default choice with 125M parameters in a 12-layer Transformer architecture, offering efficient inference at 4.87ms per sample. RoBERTa-large provides stronger representations with 355M parameters in a 24-layer Transformer. LLM2Vec-LLaMA3-8B~\citep{behnamghader2024llm2vec} adapts the LLaMA3-8B decoder-only LLM for dense embeddings via masked next-token prediction and unsupervised contrastive learning, achieving the best overall performance at the cost of 166.2ms per sample.

\subsection{Hyperparameter Configuration}

\paragraph{Prototype Settings.}
We configure 10 learnable prototypes per class, using cosine similarity with temperature $\tau=10$ and L2-normalized features. Prototypes are initialized via Xavier initialization. We apply inter-class separation with margin 0.5 and weight $10^{-3}$, plus prototype norm regularization with weight $10^{-4}$.

\paragraph{Stage 1 Training.}
We train for 8 epochs with batch size 32, learning rate $2\times10^{-5}$, weight decay 0.01, and 200 warmup steps. The loss consists solely of cross-entropy classification loss.

\paragraph{Stage 2 Training.}
We train for 8 epochs with batch size 32, reduced learning rate $10^{-5}$, weight decay 0.01, and 200 warmup steps. The total loss combines cross-entropy loss with weight 1.0, distance regression loss with weight 1.0, and prototype anchoring loss with weight 1.0 to prevent manifold collapse during transfer.

\section{Analysis Experimental Setup}
\label{app:analysis_setup}

This section provides detailed experimental configurations for the surrogate-source affinity analysis presented in Section~\ref{sec:analysis}.

\subsection{White-box Cross-Evaluation Protocol}

\paragraph{Dataset Construction.}
Following the protocol established by Fast-DetectGPT~\citep{bao2024fastdetectgpt}, we use the XSum dataset~\citep{Narayan2018DontGM} as our seed corpus. XSum contains 226,711 news articles from BBC with single-sentence summaries. We randomly sample 500 articles from the test split and truncate each to its first 30 tokens as generation prompts. Each of the nine surrogate models then generates continuations of 150--200 tokens using nucleus sampling with $p=0.96$ and temperature $T=1.0$, matching the original Fast-DetectGPT configuration. This yields nine parallel corpora of machine-generated text, each containing 500 samples.

\paragraph{Surrogate Model Pool.}
The white-box analysis employs nine open-source LLMs spanning diverse architectures and scales: GPT-Neo-2.7B~\citep{black2021gptneo}, GPT-J-6B~\citep{wang2021gptj}, GPT-NeoX-20B~\citep{biderman2023gptneox20b}, OPT-2.7B~\citep{zhang2022opt}, LLaMA3-8B~\citep{dubey2024llama3}, Falcon-7B~\citep{almazrouei2023falcon}, DeepSeek-7B~\citep{deepseek2023llm}, Qwen-3B~\citep{bai2023qwen}, and GPT2-XL~\citep{radford2019language}. Parameter counts range from 1.5B to 20B, covering both earlier architectures like GPT-2 and GPT-Neo families as well as more recent models like LLaMA3 and DeepSeek.

\paragraph{Cross-Evaluation Matrix.}
We construct a $9 \times 9$ evaluation matrix where each row represents a source model and each column represents a surrogate model. For each source-surrogate pair, we compute detection scores on the 500 machine-generated samples from the source model using the surrogate model, paired with 500 human-written samples from the original XSum articles. We report AUROC for each cell, yielding 81 evaluation points that reveal the full landscape of cross-model detection performance.

\paragraph{Detection Criteria.}
The white-box analysis evaluates five zero-shot detection criteria: Likelihood, Entropy, Rank, LogRank, and Fast-DetectGPT. For each criterion, we construct the complete $9 \times 9$ affinity matrix. Figure~\ref{fig:analysis_confusion} in the main paper visualizes the Fast-DetectGPT matrix; additional matrices for other criteria exhibit similar diagonal-dominant patterns with criterion-specific variations.

\subsection{Black-box Evaluation Protocol}

\paragraph{Benchmark Selection.}
For black-box analysis, we utilize the MIRAGE benchmark~\citep{fu2025detectanyllm}, which provides machine-generated text from major commercial LLMs including GPT-4, GPT-4o, Claude, Gemini, DeepSeek, Grok, LLaMA, Qwen, and GLM. This benchmark covers five domains and three generation tasks, offering comprehensive coverage of real-world black-box scenarios.

\paragraph{Surrogate Sweep.}
We evaluate each of the nine open-source surrogate models on all black-box source families in MIRAGE. For each source-surrogate combination, we compute AUROC using the Fast-DetectGPT criterion. This yields a performance distribution for each source family, from which we extract the minimum, median, and maximum AUROC across surrogates.

\paragraph{Performance Gap Computation.}
The performance gap reported in Figure~\ref{fig:intro} is computed as the relative difference between the best and worst surrogate for each source family: $\text{Gap} = (\text{Max} - \text{Min}) / \text{Max} \times 100\%$. This metric quantifies the potential improvement from optimal surrogate selection compared to an unfortunate random choice.

\subsection{Quantitative Summary Statistics}

Table~\ref{tab:analysis_cost} in the main paper aggregates the cross-evaluation results into summary statistics. For the white-box setting, we report the average diagonal AUROC as the matched condition and the average off-diagonal AUROC as the cross condition. For the black-box setting, we report the mean AUROC across all source-surrogate pairs, along with the average of per-source maximum and minimum AUROC values. These statistics are computed separately for each detection criterion to reveal criterion-specific sensitivity to surrogate selection.

\section{Limitations}
\label{app:limitations}

We discuss several limitations of the proposed approach.

First, DetectRouter introduces additional computational overhead during inference. The router requires encoding each input text and computing distances to all prototypes before selecting the optimal detector. While lightweight encoders such as RoBERTa-base achieve acceptable latency at 4.87ms per sample, the best-performing LLM2Vec encoder incurs 166.2ms per sample, which may limit applicability in high-throughput scenarios.

Second, the effectiveness of our routing framework depends on the coverage of the surrogate model pool. When encountering text from a source model that is distributionally distant from all surrogates in the pool, the router may fail to identify a well-matched detector. Expanding the surrogate pool can mitigate this issue but increases both training and inference costs.

Third, Stage 2 training requires detection scores from diverse black-box sources as supervision. Collecting such annotations demands running multiple detectors across large-scale datasets, which incurs substantial computational cost during the data preparation phase.

Finally, our current evaluation focuses on English text. The generalization of DetectRouter to multilingual settings remains unexplored and may require language-specific surrogate pools or cross-lingual adaptation strategies.



\end{document}